\documentclass[journal,twoside,web]{ieeecolor}
\usepackage{generic}
\usepackage{cite}
\usepackage{hyperref}
\hypersetup{hidelinks=true}
\usepackage{textcomp}
\usepackage{graphicx}
\usepackage{cite}
\usepackage[inkscapelatex=false]{svg}
\usepackage{cite}
\usepackage{algorithmic}
\usepackage[linesnumbered,ruled,vlined]{algorithm2e} 

\usepackage{amsmath,amssymb,amsfonts,bbm}
\usepackage{graphicx}
\usepackage{textcomp}
\usepackage{setspace}
\usepackage{booktabs}

\usepackage{epsfig}
\usepackage{times} 
\usepackage{svg}

\usepackage[nolist, nohyperlinks]{acronym}
\usepackage[table]{xcolor} 
\usepackage{makecell}

\usepackage{amsthm}

\newcommand{\reals}{\mathbb{R}}
\newcommand{\R}{\reals}

\newcommand{\Pbb}{\mathbb P}
\newcommand{\Ebb}{\mathbb E}
\newcommand{\Zbb}{\mathbb Z}

\newcommand{\Acal}{\mathcal{A}}
\newcommand{\Bcal}{\mathcal{B}}
\newcommand{\Ccal}{\mathcal{C}}

\newcommand{\Ecal}{\mathcal{E}}
\newcommand{\Fcal}{\mathcal{F}}
\newcommand{\Gcal}{\mathcal{G}}

\newcommand{\Ical}{\mathcal{I}}

\newcommand{\Kcal}{\mathcal{K}}

\newcommand{\Mcal}{\mathcal{M}}
\newcommand{\Ncal}{\mathcal{N}}

\newcommand{\Pcal}{\mathcal{P}}

\newcommand{\Rcal}{\mathcal{R}}
\newcommand{\Scal}{\mathcal{S}}

\newcommand{\Vcal}{\mathcal{V}}

\newcommand{\eqn}[1]{\begin{align} #1 \end{align}}
\newcommand{\eqnN}[1]{\begin{align*} #1 \end{align*}}
\newcommand{\seqn}[2][]{
\begin{subequations}
    #1
\begin{align} #2 \end{align}
\end{subequations}
}

\theoremstyle{plain}
\newtheorem{theorem}{Theorem}

\newtheorem{lemma}{Lemma}
\newtheorem{problem}{Problem}
\newtheorem{definition}{Definition}
\newtheorem{prop}{Proposition}

\theoremstyle{definition}
\newtheorem{assumption}{Assumption}
\newtheorem{remark}{Remark}

\theoremstyle{remark}

\makeatletter
\let\NAT@parse\undefined
\makeatother
\usepackage{hyperref}
\hypersetup{
colorlinks, 
    linkcolor={red!50!black},
    citecolor={blue!50!black},
    urlcolor={blue!80!black}
}
\usepackage{cleveref}

\crefformat{problem}{Problem~#2#1#3}
\crefformat{assumption}{Assumption~#2#1#3}
\crefformat{prop}{Proposition~#2#1#3}
\usepackage[nolist,nohyperlinks]{acronym}

\def\BibTeX{{\rm B\kern-.05em{\sc i\kern-.025em b}\kern-.08em
    T\kern-.1667em\lower.7ex\hbox{E}\kern-.125emX}}
\begin{document}
\title{Fully Byzantine-Resilient Multi-Agent Reinforcement Learning}
\author{Haejoon Lee, \IEEEmembership{Student Member, IEEE}, and Dimitra Panagou, \IEEEmembership{Senior Member, IEEE}
\thanks{*This work is supported by the Air Force Office of Scientific Research (AFOSR) under FA9550-23-1-0163}
\thanks{All authors are with the Robotics Department, University of Michigan, Ann Arbor, MI, USA
        {\tt \{haejoonl,dpanagou\}@umich.edu}}
        \thanks{$^a$Code: \href{https://github.com/joonlee16/frac-marl}{https://github.com/joonlee16/frac-marl}}
}

\maketitle
\begin{abstract}
We study distributed Byzantine-resilient actor-critic multi-agent reinforcement learning (AC-MARL), where agents collectively learn policies through local interactions. Existing methods guarantee convergence of the agents' parameters only to a neighborhood of the attack-free limit points, resulting in degraded performance. We propose Fully Resilient AC-MARL (FRAC-MARL), a decentralized method in which each agent leverages redundancy in two-hop messages to identify reliable messages. Under linear parameterizations of the value and team-reward functions and Byzantine edge attacks, where adversarial behavior is confined to the communication layer, we prove that agents' parameters converge almost surely to the same limit points as in the attack-free case over time-varying communication graphs. We introduce a novel topological condition for the convergence of our method, present a systematic method to construct such networks, and prove that this condition can be verified in polynomial time. Finally, we demonstrate our method on cooperative multi-robot formation control tasks. [code]$^a$
\end{abstract}

\begin{IEEEkeywords}
Multi-Agent reinforcement learning, networked control systems, fault-tolerant systems
\end{IEEEkeywords}

\section{Introduction}
\label{sec:introduction}
\IEEEPARstart{M}ulti-agent reinforcement learning (MARL) has emerged as a powerful and scalable extension of reinforcement learning (RL)~\cite{sutton1998reinforcement} for learning optimal policies of multiple agents interacting within a shared environment~\cite{zhang2021multi}. Distributed cooperative MARL in particular considers multiple agents that aim to optimize a shared global objective - often expressed as the average of local rewards - through information sharing with their neighbors in a network~\cite{zhang2018fully, lin2021multi, qu2020scalable}.

In this paper, we consider fully distributed actor-critic MARL (AC-MARL) methods studied in~\cite{zhang2018fully, zeng2022learning, dai2023distributed, dai2025distributed}, where agents with heterogeneous reward functions perform consensus-based updates of their local policy and value-function parameter estimates over a communication network. Compared with distributed tabular Q-learning approaches~\cite{kar2013QD}, these methods offer improved scalability to problems with large state and action spaces. Representative deep actor-critic MARL methods include MADDPG~\cite{lowe2017multi}, COMA~\cite{foerster2018counterfactual}, and MAPPO~\cite{yu2022surprising}. However, these methods typically rely on centralized training or centralized value estimation.

For fully distributed AC-MARL,~\cite{zhang2018fully} established convergence under linear function approximation, while finite-time convergence guarantees were provided in~\cite{zeng2022learning}. Later,~\cite{dai2023distributed} established convergence under linear function approximation for directed communication graphs. 
Finally,~\cite{dai2025distributed} presented an algorithm with nonlinear function approximation and established asymptotic convergence guarantees.

Despite these merits, distributed MARL algorithms, similarly to other distributed learning and optimization methods, are highly susceptible to adversarial attacks that corrupt or manipulate information. In the distributed systems literature, the Byzantine model represents an omniscient adversary capable of injecting arbitrary disruptions through compromised hardware, software, or communication channels~\cite{su2021byzantine, leblanc2013resilient}. As a result, a wide range of resilient algorithms have been developed to contain the impact of Byzantine agents on distributed consensus~\cite{leblanc2013resilient, yuan2025resilient, CDC2025}, optimization~\cite{sundaram2019distributed, su2021byzantine, yemini2025resilient}, learning frameworks~\cite{fang2022bridge, chen2017distributed, blanchard2017machine}, and, more recently, multi-agent LLM decision-making systems~\cite{EMNLP2026, luo2025weighted}.

Similarly, Byzantine-resilient distributed MARL has received increasing attention in recent years. Early studies demonstrated that even a single adversarial agent can significantly degrade or destabilize learning in cooperative MARL settings~\cite{figura2021adversarial, xie2021towards}. In fact, it has been shown in~\cite{hairi2024onthehardness} that learning optimal policies is generally impossible in the presence of Byzantine agents.

To address this vulnerability, several Byzantine-resilient MARL algorithms have been proposed. The distributed tabular Q-learning algorithm in~\cite{kar2013QD} was extended in~\cite{xie2023communication} by incorporating trimmed-mean aggregation to guarantee convergence in the presence of Byzantine agents. The trimmed-mean strategy was subsequently incorporated into consensus-based AC-MARL algorithms with linear function approximation to achieve resilience against Byzantine agents~\cite{wu2021byzantine_journal, yao2024communication_efficient}. More recently,~\cite{ye2024resilient, gong2026resilient} combined projection-based updates with trimmed-mean aggregation to further strengthen the resilience of AC-MARL. Similarly,~\cite{medhi2023robust} employs geometric-median aggregation to mitigate Byzantine attacks, although it does not provide convergence guarantees and considers only non-colluding adversaries.

Despite these advances, existing methods have several limitations. First, these approaches guarantee convergence only to a neighborhood of the attack-free limit point of the attack-free case. Thus, Byzantine agents may strategically manipulate the exchanged parameters to substantially degrade the performance of the non-Byzantine agents~\cite{ye2024resilient}. Although~\cite{ye2024resilient, gong2026resilient} mitigate this issue, they still guarantee only neighborhood convergence, and the size of the resulting neighborhood is generally difficult to characterize.

Furthermore, many Byzantine-resilient AC-MARL methods, including~\cite{ye2024resilient, yao2024communication_efficient, gong2026resilient}, require the communication network to satisfy $(2F+1)$-robustness~\cite{leblanc2013resilient}. Since verifying such properties is coNP-complete~\cite{zhang2015notion}, their applicability to large-scale systems may be limited. A separate line of work~\cite{lin2020toward, lin2024robust, fang2025provably} avoids these $r$-robustness requirements, but instead relies on a trusted central coordinator.

In addition, decentralized MARL with heterogeneous local rewards can be viewed through the lens of decentralized stochastic optimization under non-i.i.d. data distributions. Compared with Byzantine-resilient deterministic optimization~\cite{sundaram2019distributed, su2021byzantine} and stochastic optimization under i.i.d. assumptions~\cite{fang2022bridge, guo2022byzantine}, the non-i.i.d. setting introduces additional challenges due to heterogeneity and stochasticity across agents, which induces intrinsic bias that is further exacerbated by Byzantine attacks~\cite{wu2023byzantine_stochastic}. The work in~\cite{el2021collaborative} studied Byzantine-resilient stochastic optimization with non-i.i.d. data from a consensus perspective under a complete communication graph. In~\cite{wu2023byzantine_stochastic}, authors iteratively remove values farthest from the weighted mean to construct a doubly stochastic mixing matrix with a sufficiently small contraction factor, thereby ensuring convergence. In contrast,~\cite{he2022byzantine} proposed a clipping-based mechanism that clips received values to neighborhoods centered at the receiving agents' values. Nevertheless, these methods only guarantee convergence to a neighborhood of a stationary point of the underlying optimization problem.

To address these limitations, we build on our prior work on resilient distributed Q-learning~\cite{CDC2026} and extend the framework to decentralized Byzantine-resilient AC-MARL over time-varying communication graphs. Specifically, we propose Fully Resilient AC-MARL (FRAC-MARL), in which agents use redundant information relayed through two-hop communication to identify reliable messages before incorporating them into their local updates. Under a sufficient topological condition on the communication networks and a weaker Byzantine attack model, we establish that the agents' parameters converge almost surely to the exact limit points of the attack-free scenario, rather than to a neighborhood.

\subsubsection*{Contributions} 
Our contributions are as follows:
\begin{itemize}
\item We propose a novel decentralized Byzantine-resilient AC-MARL algorithm, termed Fully Resilient AC-MARL (FRAC-MARL), that leverages redundancy in two-hop communication to identify and incorporate reliable messages for learning over time-varying communication graphs.

\item We introduce a novel topological condition, termed $(r,r')$-redundancy, and prove that under linear function approximation it guarantees that FRAC-MARL converges almost surely to the same limit points as the attack-free case under Byzantine edge attacks, a weaker Byzantine model in which adversarial behavior is restricted to the communication layer.

\item We provide a constructive procedure for designing $(r,r')$-redundant graphs and show that this property can be verified in polynomial time, in contrast to $r$-robustness~\cite{leblanc2013resilient}, which is widely adopted in existing work~\cite{ye2024resilient, yao2024communication_efficient, gong2026resilient} but is coNP-complete to verify~\cite{zhang2015notion}.

\item We demonstrate our method can be applied even with nonlinear function approximation in a cooperative multi-agent formation task.
\end{itemize}

\section{Notation}

We denote the cardinality of a set $\Ccal$ as $|\Ccal|$. We denote the sets of non-negative and positive integers as $\mathbb Z_{\geq 0}$ and $\mathbb Z_{>0}$.  A multiset $\Ccal$ is a collection in which elements may occur with multiplicity. For a (multi)set $\Ccal$, we define ${\rm mode}(\Ccal)$ as any element with the highest number of occurrences in $\Ccal$, with ties broken uniformly at random and ${\rm mode}(\emptyset)=0$. We denote ${\rm mode\_count}(\Ccal)$ as the maximum number of occurrences of any element in a (multi)set $\Ccal$. We denote ${\rm diag}(\cdot)$ as the diagonal matrix formed by the elements of its argument. We denote the probability and expectation of a probability space $(\Omega, \Fcal, \Pbb)$ by $\Pbb(\cdot)$ and $\Ebb(\cdot)$. We denote the Kronecker product by $\otimes$. For finite sets $\Ccal_1$ and $\Ccal_2$ and functions $g_1:\Ccal_1\to\R$ and $g_2:\Ccal_1\times \Ccal_2\to\R$, we write
$[g_1(c_1),\ c_1\in\Ccal_1]^\top\in\R^{|\Ccal_1|}$ and $[g_2(c_1,c_2),\ c_1\in\Ccal_1, c_2 \in \Ccal_2]^\top\in\R^{|\Ccal_1|\cdot |\Ccal_2|}$ for the column vector of values
$g_1(c_1)$ and $g_2(c_1,c_2)$ respectively, stacked according to fixed orderings of $\Ccal_1$ and $\Ccal_1\times\Ccal_2$, respectively, used consistently throughout the paper. A sequence of random variables $\{X_t\}_{t\geq 0}$ is said to converge to a random variable $X$ almost surely (a.s.) if $\Pbb(\lim_{t\to\infty}X_t=X)=1$.

\section{Preliminaries}
\label{sec:prelim}

We consider a system of $n$ agents interacting over a simple, undirected, and time-varying communication graph $\Gcal_t=(\Vcal, \Ecal_t)$. The vertex set $\Vcal=\{1,\dots, n\}$ represents the agents, and the edge set $\Ecal_t\subseteq \Vcal \times \Vcal$ denotes a set of communication links between agents at time $t$. Since the graph is undirected, $(i,j)\in \Ecal_t$ implies $(j,i)\in \Ecal_t$. For agent $i$, its one-hop and two-hop neighbor sets at time $t$ are denoted by $
\Ncal_{i,t}=\{j\in \Vcal \mid (i,j)\in \Ecal_t\}$ and $\Ncal^{(2)}_{i,t}=\{k\in\Vcal \mid \exists\, j\in\Ncal_{i,t}\ \text{s.t.}\ k\in\Ncal_{j,t}\setminus\{i\}\}$. The extended one-hop neighbor set is $\Bcal_{i,t} = \Ncal_{i,t}\cup \{i\}$. A path of length $k\in\Zbb_{>0}$ is a sequence of vertices $(y_0, \dots, y_k)$ such that $(y_{\ell-1}, y_\ell)\in \Ecal_t$ $\forall \ell\in\{1,\dots,k\}$. A graph is connected if there exists a path between any pair of nodes.

In our setting, agents $i\in\Vcal$ not only communicate with their direct neighbors $\Ncal_{i,t}$ at time $t$, but also with their two-hop neighbors $\Ncal^{(2)}_{i,t}$ through relaying, which we refer to as \emph{two-hop communication}:
\begin{definition}[Two-hop communication]
\label{def:two_hop} Let $m^{j\to i}_{j'}(t)$ denote the copy of the message $m_{j'}(t)$ originating from any agent $j'\in\mathcal V$ and delivered to agent $i$ by agent $j$ at time $t$. We set $m^{j\to i}_{j'}(t)=\varnothing$ if the message is not received by agent $i$.

At each time $t\in\Zbb_{\ge 0}$, agents exchange messages over $\Gcal_t=(\Vcal,\Ecal_t)$
in two consecutive rounds. In round
$z=1$ (\emph{sending}), each agent $i$ transmits its own message $m_i(t)$ to every
neighbor $j\in\Ncal_{i,t}$. In round $z=2$ (\emph{relaying}), each agent $i$ forwards to every neighbor $j\in\Ncal_{i,t}$ the messages it received in round $1$, i.e., $\left\{
m_{k}^{k\to i}(t)\right\}_{ k\in\Ncal_{i,t}}$.
Thus, agent $j$ obtains a copy of the messages originating from its two-hop neighbors $\Ncal^{(2)}_{j,t}$ through an intermediate neighbor $i\in\Ncal_{j,t}$. 
\end{definition}

The environment is modeled as a networked multi-agent Markov Decision Process (MDP) defined by a tuple $(\Scal, \{\Acal^i\}_{i \in \Vcal}, P, \{r^i\}_{i \in \Vcal}, \{\Gcal_t\}_{t \geq 0},\gamma)$. Here, $\Scal$ denotes the finite state space shared by all agents, and $\Acal^i$ denotes the finite action space of agent $i$. The joint action space is given by $\Acal = \prod_{i\in\Vcal}\Acal^i$. The function $P:\Scal\times\Acal\times\Scal\rightarrow[0,1]$ specifies the state transition probability, where $P(s' \mid s,a)$ denotes the probability of transitioning to state $s'$ from state $s$ under action $a$. Each agent $i$ has a local reward function $r^i:\Scal\times\Acal\rightarrow\mathbb{R}$, and $\gamma\in(0,1)$ is the discount factor. We assume that the state and joint action are globally observable, while rewards are private and observed only by the corresponding agents.

At each time step $t$, each agent $i\in\Vcal$ observes the current state $s_t\in \Scal$ and independently selects an action
\eqn{
a_t^i\sim\pi^i(\cdot\mid s_t),
}
where $\pi^i:\Scal\times\Acal^i\rightarrow[0,1]$ is the local stochastic policy. The resulting joint action is $a_t=(a_t^1,\dots,a_t^n)\in\Acal$. We assume that agent actions are conditionally independent given the current state, that is,
\eqn{\label{eq:factorized_policy}
\pi(a\mid s)=\prod_{i\in\Vcal}\pi^i(a^i\mid s).
}

After executing $a_t^i$ at the state $s_t$, each agent $i$ receives a local realized reward $r_{t+1}^i= r^i(s_t,a_t)$,
and the environment transitions to the next state according to $
s_{t+1}\sim P(\cdot\mid s_t,a_t)$.

\subsection{Objective and Policy Parameterization} 
For a joint policy $\pi$ and a fixed initial-state distribution $\nu$ over $\Scal$,
the agents collectively maximize a globally averaged, discounted long-term
reward
\eqn{\label{eq:global_objective}
J(\pi)
& :=
\Ebb_{s_0\sim\nu,\,\pi}
\Big[\textstyle\sum_{t=0}^{\infty}\gamma^{t}\bar r(s_t,a_t)\Big],}
where $\bar r(s,a)=\frac{1}{n}\sum_{i\in\Vcal}r^i(s,a)$ represents the network-wise average reward at state-action pair $(s,a)$. We also define the associated action- and state-value functions
\eqn{
Q_\pi(s,a)
& :=
\Ebb_{\pi}
\Big[\textstyle\sum_{t=0}^{\infty}\gamma^{t}\bar r(s_t,a_t)
\;\Big|\;s_0=s,\,a_0=a\Big],\\
V_\pi(s)
& :=
\textstyle\sum_{a\in\Acal}\pi(a\mid s)\,Q_\pi(s,a).
}
Since $Q_\pi(s,a)=\bar r(s,a)+\gamma\sum_{s'\in\Scal}P(s'\mid s,a)V_\pi(s')$,
the temporal-difference (TD) error
\eqn{\label{eq:td_error}
\delta_t=\bar r(s_t,a_t)+\gamma V_\pi(s_{t+1})-V_\pi(s_t)
}
is an unbiased sample of the global advantage, i.e.,
$\Ebb[\delta_t\mid s_t=s,\,a_t=a]=Q_\pi(s,a)-V_\pi(s)$.

To enable scalable learning in large state and action spaces, we parameterize
each local policy~\cite{bhatnagar2009natural}. Let $\pi^i_{\theta^i}:\Scal\times\Acal^i\to[0,1]$ denote the local policy of agent $i\in \Vcal$ parametrized by
$\theta^i\in\Theta^i\subset\R^{b_i}$. By the conditional independence
in~\eqref{eq:factorized_policy}, we have 
\eqnN{
\pi_\theta(a\mid s)&=\prod_{i\in\Vcal}\pi^i_{\theta^i}(a^i\mid s),
\\
P_\theta(s'\mid s)&=\sum_{a\in\Acal}\pi_\theta(a\mid s)P(s'\mid s,a),
}
where $\theta=[(\theta^1)^\top,\dots,(\theta^n)^\top]^\top
\in\prod_{i\in\Vcal}\Theta^i=\Theta$. For any $\theta\in\Theta$, the Markov process $\{s_t\}_{t\in\mathbb{Z}_{\geq0}}$
induced by $\pi_\theta$ is irreducible and aperiodic. This guarantees unique, strictly positive stationary
distributions $d_\theta(s)$ and $d'_\theta(s,a)=d_\theta(s)\pi_\theta(a\mid s)$
of states and state-action pairs. 

Writing $J(\theta):=J(\pi_\theta)$,
$Q_\theta:=Q_{\pi_\theta}$, and $V_\theta:=V_{\pi_\theta}$, our objective is to
find $\theta^\star=\arg\max_{\theta\in\Theta}J(\theta)$. We follow the multi-agent
policy gradient theorem from~\cite{ye2024resilient}, where each agent improves its policy
along the direction
\eqn{\label{eq:policy_gradient}
h^i(\theta)
:= \Ebb_{d_\theta, \pi_\theta}
\left[\nabla_{\theta^i}\log\pi^i_{\theta^i}(a^i\mid s)
\big(Q_\theta(s,a)-V_\theta(s)\big)\right].
}
Note that~\eqref{eq:policy_gradient} follows the form given in~\cite{figura2021adversarial,ye2024resilient} which is the surrogate for $\nabla_{\theta^i}J(\theta)$ obtained by taking the
expectation under the on-policy stationary distribution $d_\theta$ rather than
the discounted visitation measure.
 
\subsection{Decentralized Policy Evaluation}
 
Evaluating~\eqref{eq:td_error} requires evaluations of $\bar r(s,a)$ and $V_\theta(s)$, which are not available to individual agents as rewards are private. Therefore, we adopt the consensus-based AC-MARL algorithm from~\cite{zhang2018fully}, where each agent
maintains approximations $\bar r(s,a;\lambda^i)\approx\bar r(s,a)$ and
$V(s;v^i)\approx V_\theta(s)$, parameterized by $\lambda^i$ and $v^i$, respectively, while reaching consensus on $\lambda^i,v^i$ over the communication graph
$\Gcal_t$. In this paper, we consider linear approximation, i.e.,
\eqn{\label{eq:linear_approx}
\bar r(s,a;\lambda^i)=f(s,a)^\top\lambda^i,
\qquad
V(s;v^i)=\phi(s)^\top v^i,
}
with the assumption:
\begin{assumption}
\label{assump:features}
The feature vectors $f(s, a) = [f_1(s, a), \dots, f_M(s, a)]^\top \in \R^M$ and $\phi(s) = [\phi_1(s), \dots, \phi_L(s)]^\top \in \R^L$ are uniformly bounded for any $s \in \Scal$ and $a \in \Acal$. Furthermore, if we define the feature matrix $\mathbf{F} \in \mathbb{R}^{|\Scal| \cdot |\Acal| \times M}$ with its $m$-th column $[f_m(s, a), \ s \in \Scal, a \in \Acal]^\top$ for any $m \in \{1, \dots, M\}$, and the feature matrix $\mathbf{\Phi} \in \mathbb{R}^{|\Scal| \times L}$ with $[\phi_l(s), \ s\in \Scal]^\top$ as its $l$-th column for any $l \in \{1, \dots, L\}$, then both $\mathbf{\Phi}$ and $\mathbf{F}$ have full column rank.
\end{assumption}
The assumption of the full-rank feature matrices allow us to characterize a unique asymptotically stable equilibrium in the estimations of the critic and team-averaged reward functions.

In~\cite{zhang2018fully}, at time $t\in \Zbb$, each agent first uses the current state $s_t$, next state $s_{t+1}$, and its \emph{private} reward $r^i_{t+1}$ to compute its local TD error and reward-estimation error: \seqn[\label{eq:td_and_reward_estimation}]{\psi_t^i &= r^i_{t+1} + \gamma V(s_{t+1}; v_t^i) - V(s_t; v_t^i), \\ \xi_t^i &= r^i_{t+1} - \bar{r}(s_t, a_t; \lambda_t^i),}
respectively. Then, using~\eqref{eq:td_and_reward_estimation}, each agent
updates its parameters by interleaving local stochastic approximation steps
\seqn[\label{eq:local_updates}]{
\tilde v^i_t & = v_t^i+\alpha_t^{v}\cdot\psi_t^i\cdot\nabla_{v^i}V(s_t;v_t^i),\\
\tilde\lambda_t^i & = \lambda_t^i+\alpha_t^{\lambda}\cdot\xi^i_t\cdot
\nabla_{\lambda^i}\bar r(s_t,a_t;\lambda_t^i),
}
with consensus over $\Gcal_t$:
\eqn{
\label{eq:consensus_updates}
v_{t+1}^i = \sum_{k \in \Bcal_{i,t}} w_{v,t}^{i, k} \tilde{v}^k_t, \quad   \lambda_{t+1}^i = \sum_{k \in \Bcal_{i,t}} w_{\lambda,t}^{i, k} \tilde{\lambda}^k_t,}
where $\alpha_t^{v},\alpha_t^{\lambda}>0$ are step sizes and the consensus weights satisfy $\sum_{k\in\Bcal_{i,t}}w^{i,k}_{v,t}
=\sum_{k\in\Bcal_{i,t}}w^{i,k}_{\lambda,t}=1$ for all $t\in\Zbb_{\geq0}$.

Because $\psi^i_t$ and $\xi^i_t$ are formed from $r^i_{t+1}$ rather than $\bar r(s_t,a_t)$, the local steps~\eqref{eq:local_updates} alone are generally biased with respect to the team objective. This discrepancy is compensated for by the consensus step in \eqref{eq:consensus_updates}, which drives the agents toward agreement. Consequently, every agent converges to a common limit determined by the team-average reward $\bar r$ under some assumptions.

\subsection{Byzantine Resilient AC-MARL}

However, such AC-MARL methods are vulnerable to Byzantine agents who deviate arbitrarily from the prescribed protocols~\cite{leblanc2013resilient,su2021byzantine}.
Although numerous methods have been proposed to ensure resilience~\cite{ye2024resilient, wu2021byzantine_journal, gong2026resilient}, they only guarantee convergence to a neighborhood of the attack-free limit that would be achieved in the absence of Byzantine attacks.
In fact, finding the exact optimal value functions in the presence of Byzantine agents is generally impossible~\cite{hairi2024onthehardness}. Therefore, we consider a slightly weaker but practical model in which Byzantine agents can only attack in the communication layer:

\begin{definition}[$F$-total Byzantine Edge Attack]
\label{def:byzantine_edge_attack}
Consider the two-hop communication of Definition~\ref{def:two_hop}. An edge
$(i,j)\in\Ecal_{\Bcal}^{z}(t)\subseteq\Ecal_t$ is said to be under a \textbf{Byzantine
edge attack} during round $z\in\{1,2\}$ at time $t$ if the message sent by agent $i$ is
arbitrarily altered or dropped before being received by agent $j$; that is,
\begin{align*}
z=1:&\quad m^{i\to j}_i(t)\neq m_i(t),\\
z=2:&\quad m^{i\to j}_k(t)\neq m^{k\to i}_k(t),
      \ \text{ for some } k\in\Ncal_{i,t}.
\end{align*}

 The network $\Gcal_t=(\Vcal,\Ecal_t)$ is said to be under an \textbf{$F$-total Byzantine edge attack} if
$\sum_{z=1}^{2}|\Ecal_{\Bcal}^z(t)|\le F$, $\forall t\in\Zbb_{\ge 0}$.
\end{definition}

We assume that Byzantine edge attackers have limited resources and can compromise at most $F$ communications during each time step $t$. By definition, an attacker may corrupt either an agent's original message or a message relayed by an intermediate agent. This attack model is closely related to the communication attack models studied in~\cite{gong2026resilient,lei2026distributed}.

Unlike much of the Byzantine MARL literature that assumes agents themselves are unreliable, we consider only \emph{unreliable communication.} Thus, we assume that 
\begin{assumption}
    All agents $i\in \Vcal$ are cooperative and follow the prescribed protocol. \label{assum:cooperative}
\end{assumption}

Leveraging this assumption on the agents' behavior - equivalently, that adversarial behavior is confined to the communication layer - we aim to develop AC-MARL that is fully resilient to Byzantine attacks and recovers the learning performance achievable in the absence of attacks, rather than merely guaranteeing convergence to a neighborhood of the attack-free performance. Our problem is therefore as follows:
\begin{problem}
\label{prob:problem}
Design a resilient AC-MARL algorithm such that, under
Assumptions~\ref{assump:features}-\ref{assum:cooperative} and an $F$-total
Byzantine edge attack, the critic, team-reward, and policy parameter estimates $\{v^i_t\}$, $\{\lambda^i_t\}$, and $\{\theta^i_t\}$ of
every agent $i\in \Vcal$ converge almost surely to the same limits attained in the absence of
attacks, using two-hop communications over $\Gcal_t=(\Vcal,\Ecal_t)$.
\end{problem}

By~\eqref{eq:consensus_updates}, each agent updates its parameters from the
messages received from its neighbors. Under an $F$-total Byzantine edge attack
some of these messages are corrupted, but an agent cannot tell which messages to trust and filter. The problem
therefore reduces to replacing the weighted average in \eqref{eq:consensus_updates} with a robust aggregation operator $\Rcal$ such that 
\eqn{\label{eq:general_robust_updates}
v_{t+1}^i, \lambda_{t+1}^i
=\Rcal\left(
\{m^{j\to i}_k\}_{k\in\Bcal_{i,t}\cup \Ncal_{i,t}^{(2)}}
\right)}
using only the messages available to agent $i$. Thus, the main challenge is to design an aggregation rule that can identify and filter unreliable messages without disrupting the learning performance.

We develop a method that guarantees each agent determines which messages to trust and filter through two-hop communication, such that the (i) actual induced communication remains connected and undirected~(\Cref{lem:filtered_undirected}) and (ii) $\Rcal$ completely filters out the Byzantine-induced messages in its update~(\Cref{lem:filtered_out}). Combining these two results, we demonstrate that our method ensures the same convergence guarantees as in the absence of attacks~(Theorems~\ref{thm:critic}-\ref{thm:actor}).

\section{Method}
Before presenting our method, we first provide the intuition. From a local perspective, agents cannot directly identify which messages from one-hop neighbors are compromised by Byzantine edge attacks. Therefore, many Byzantine-resilient methods rely on blind trimmed-mean techniques to discard outlier parameters. While effective, such filtering approaches have two fundamental limitations. First, because each agent independently filters the received messages, different agents may retain different subsets of their neighbors' parameters, breaking the symmetry of the underlying information flow and, consequently, the doubly stochastic property of the mixing weight matrix. Second, even after filtering, Byzantine messages that remain within the accepted range can still introduce a systematic bias, preventing the agents from fully recovering the attack-free learning behavior.

Our method leverages message redundancy through two-hop communication. Specifically, each agent relays the information received from its one-hop neighbors. As a result, agent $i\in \Vcal$ may receive information originating from the same agent $k\in\Ncal_{i,t}^{(2)}\cup\Ncal_{i,t}$ through multiple distinct one-hop neighbors, corresponding to different communication paths. Under an $F$-total Byzantine edge attack, only a limited number of messages through these paths can be compromised. As a result, the same parameter update from the same two-hop neighbor may be relayed through multiple independent paths, providing redundant copies of the information available to agent $i$. By cross-checking these redundant parameter updates, agent $i$ can aggregate messages while filtering the influence of Byzantine-corrupted information.

\subsection{Fully Resilient Actor-Critic MARL (FRAC-MARL)}

\begin{algorithm}
\SetKwInOut{Inputs}{Inputs}

\caption{Fully Resilient Actor-Critic MARL (FRAC-MARL)}
\label{alg:frac}

\Inputs{Threshold $\tau$ and step sizes ${\alpha_t^{\theta}},{\alpha_t^{v}},{\alpha_t^{\lambda}}$}

\tcp{Local Update}
Take action $a_t^i\sim\pi^i_{\theta_t^i}(\cdot \mid s_t)$, and observe next state $s_{t+1}$ and local reward $r_{t+1}^i$

Update actor
\eqnN{
\delta_t^i & = \bar{r}(s_t, a_t; \lambda^i_t) + \gamma V(s_{t+1}; v_t^i) - V(s_{t}; v_t^i)\\
\theta_{t+1}^i & = \theta_t^i +\alpha_t^{\theta} \delta_t^i \nabla_{\theta^i} \log \pi^i_{\theta^i_t}(a_t^i\mid s_t)}

Update critic and reward function
\seqn[\label{eq:local_update}]{
\psi_{t}^i & = r^i_{t+1} + \gamma V(s_{t+1}; v_t^i) - V(s_t; v_t^i) \\ 
\tilde{v}_t^i & = v_t^i+ \alpha_t^{v} \psi_{t}^i \nabla_{v^i}V(s_t;v_t^i)\\
\tilde{\lambda}_t^i & = 
\lambda_t^i + \alpha_t^{\lambda}
\left( r^i_{t+1} - \bar{r}(s_t, a_t; \lambda_t^i)\right)
\nabla_{\lambda^i}
\bar r(s_t,a_t;\lambda_t^i)
}

\tcp{Two-Hop Communication}
send $m_i(t):=(\tilde v^i_t,\tilde\lambda^i_t,i)$ to every $j\in\Ncal_{i,t}$

Relay $\{m^{k\to i}_k(t)\}_{k\in\Ncal_{i,t}}$ to every $j\in\Ncal_{i,t}$

\tcp{Redundancy-Based Filter/Consensus}

For each $k\in\Vcal\setminus\{i\}$, collect the received copies of $m_k(t)$, with at most one copy from each relaying agent $j\in\Ncal_{i,t}$, and define a multi-set
\eqnN{\Kcal^{i,k}_t\gets\big\{\,m^{j\to i}_k(t)\ \mid\ j\in\Ncal_{i,t},
\ m^{j\to i}_k(t)\neq\varnothing\,\big\}}

Collect the indices received in the two rounds,
$\Ical_{i,t}\gets\big\{k\in\Vcal\setminus\{i\}\mid \Kcal^{i,k}_t\neq\emptyset\big\}$

\For{$k\in\Ical_{i,t}$}{
Find the most repeated message,
\eqn{(\hat v_t^k,\hat\lambda_t^k,k) \gets {\rm mode}\big(\Kcal^{i,k}_t\big)
\label{eq:mode}}
}

$\Mcal_{i,t} = \left\{k \in \Vcal\setminus\{i\}\mid \mathrm{mode\_count}(\Kcal_t^{i,k}) \ge \tau\right\}$

Update the parameters
\seqn[\label{eq:averaging}]{v^i_{t+1} = w_{v,t}^{i,i}\tilde{v}_t^i + \sum_{j \in \Mcal_{i,t}} w_{v,t}^{i,j}\hat{v}_t^j, \\ \lambda^i_{t+1} =  w_{\lambda,t}^{i,i}\tilde{\lambda}_t^i + \sum_{j \in \Mcal_{i,t}} w_{\lambda,t}^{i,j}\hat{\lambda}_t^j,
} where $\sum_{j\in \Mcal_{i,t}\cup\{i\}}w^{i,j}_{v,t}=\sum_{j\in \Mcal_{i,t}\cup\{i\}}w^{i,j}_{\lambda,t}=1$ 
\end{algorithm}

The FRAC-MARL (outlined in \Cref{alg:frac}) operates by combining local actor-critic updates (lines 1-3), adopted from~\cite[Algorithm 2]{zhang2018fully}, with redundancy-based resilient aggregation (lines 4-11), which corresponds to the robust aggregator $\Rcal$ in~\eqref{eq:general_robust_updates}. At each time step $t\in \Zbb_{\geq 0}$, agent $i\in\Vcal$ first performs the standard local actor-critic updates (lines 1-3), yielding intermediate value-function and reward-model parameters $\tilde v_t^i$ and $\tilde\lambda_t^i$, respectively.


After local learning, agent $i$ broadcasts its message $m_i(t)=(\tilde v_t^i,\tilde \lambda_t^i,i)$ to its one-hop neighbors. Then each agent relays the messages it receives $\{m_k^{k\to i}(t)\}_{k \in \Ncal_{i,t}}$ to its own one-hop neighbors, allowing information to propagate over two-hop communication paths (lines 4-5). Then agent $i$ stores the messages with claimed index $k$ into the multiset $\Kcal_t^{i,k}$, one per relaying
agent, for every $k\in\Vcal\setminus\{i\}$ (line 6). Note that, an honest relay forwards at most one message per origin, so $\Kcal^{i,k}_t$ receives more than one tuple from $j$ only if $(j,i)$ is attacked. In this case, agent $i$ retains an arbitrary message for the index $k$. Then, to filter out Byzantine-influenced messages, agent $i$ performs redundancy-based filtering by constructing the set $\Mcal_{i,t}$ containing only agents $k$ whose relayed parameter updates in $\Kcal_t^{i,k}$ have a mode appearing at least $\tau$ times (lines 7-10). Finally, the accepted critic and reward-model parameters are aggregated via weighted averaging to update $v_t^i$ and $\lambda_t^i$ (line 11).

\subsection{$(r,r')$-redundancy}

Note that FRAC-MARL relies on the redundancy-based filtering (lines 7-10). The required level of redundancy, denoted by $\tau$, is a user-defined threshold that affects the filtering process and, consequently, the learning performance. In this subsection, we introduce a novel topological property that allows us to theoretically characterize the conditions under which the proposed filtering mechanism and FRAC-MARL achieve the desired performance.

First, we define an $r$-2-hop graph which is defined as below:
\begin{definition}[$r$-2-hop Graph]
Let $\Gcal_t = (\Vcal, \Ecal_t)$ be an undirected graph at time $t$. We define the \textbf{$r$-2-hop graph of $\Gcal_t$}, denoted as $\Gcal^r_t = (\Vcal, \Ecal^r_t)$, such that an edge $(i, j) \in \Ecal^r_t$ if $|\Bcal_{i,t}\cap \Ncal_{j,t}|\geq r$, where $\Bcal_{i,t}=\Ncal_{i,t}\cup\{i\}$.
\end{definition}

An illustrative example of an $r$-2-hop graph is given in~\Cref{fig:illustrative_example}. An $r$-2-hop graph contains an edge $(i,j)$ if agents $i$ and $j$ share at least $r$ neighbors (including direct links). Equivalently, there are at least $r$ vertex-disjoint paths of length at most 2 connecting them. 

\begin{definition}[$(r,r')$-redundant]
\label{def:redundant}
Let $\Gcal_t = (\Vcal, \Ecal_t)$ be an undirected graph at time $t$, and let 
$\Gcal^r_t = (\Vcal, \Ecal^r_t)$ be its $r$-2-hop graph. We say that $\Gcal_t$ is \textbf{$(r,r')$-redundant} with $r>r'\geq 0$ at time $t$ if:
\begin{enumerate}
    \item $\Gcal^r_t$ is connected, and
    \item for all $(i,j) \notin \Ecal^r_t$, $|\Bcal_{i,t}\cap \Ncal_{j,t}| \le r'$.
\end{enumerate}
\end{definition}

\begin{figure}
    \centering
\includegraphics[width=0.9\linewidth]{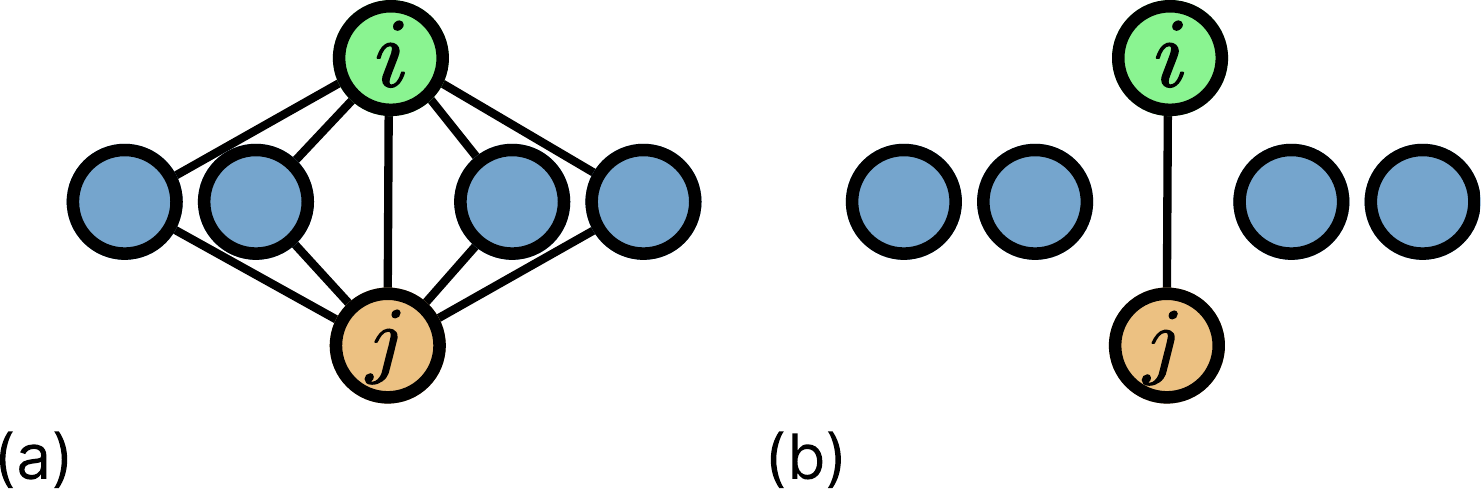}
    \caption{Visualizations of (a) graph $\Gcal_t=(\Vcal, \Ecal_t)$ and (b) its $5$-2-hop graph $\Gcal^5_t=(\Vcal, \Ecal^5_t)$ at time $t$. The edge $(i,j)\in \Ecal^5_t$, since $|\Bcal_{i,t}\cap \Ncal_{j,t}|\geq 5$.}
    \label{fig:illustrative_example}
\end{figure}
\begin{figure}
    \centering
\includegraphics[width=0.9\linewidth]{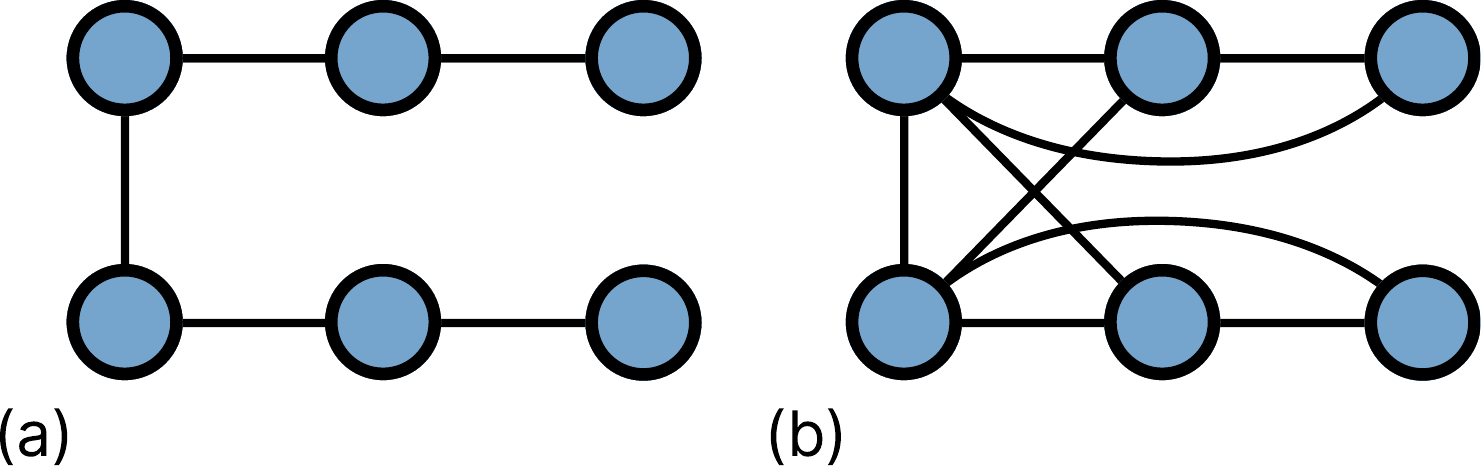}
    \caption{(a) $(1,0)$-redundant graph and (b) its 1-2-hop graph.}
    \label{fig:example_of_transformation}
\end{figure}

\Cref{fig:example_of_transformation} visualizes an $(r,r')$-redundant graph and its $r$-2-hop graph. A graph $\Gcal_t$ is $(r,r')$-redundant if two things hold. First, its $r$-2-hop graph is connected. Second, for all agent pairs not connected in the $r$-2-hop graph, they share at most $r'$ neighbors (including direct links) in the graph $\Gcal_t$. That is, for any $i,j\in \Vcal$, they share either at least $r$ or at most $r'$ neighbors (including direct links).

Going back to~\Cref{fig:example_of_transformation} as an example, for every edge $(i,j)$ in the original graph (\Cref{fig:example_of_transformation}~(a)), we have $|\Bcal_{i,t}\cap \Ncal_{j,t}|=1$. Setting $r=1$, its $r$-2-hop graph (\Cref{fig:example_of_transformation}~(b)) is connected. Because every pair $i,j\in \Vcal$ of nodes not connected in the $r$-2-hop graph satisfies $|\Bcal_{i,t}\cap \Ncal_{j,t}|=0$, the graph in~\Cref{fig:example_of_transformation}~(a) is $(1,0)$-redundant.

\begin{remark}
The $(r,r')$-redundancy condition quantifies the topological condition required for agents to have symmetric validations of correct messages amidst a bounded number of adversarial communication attacks. The threshold $r$ ensures that for any two adjacent agents in the $r$-2-hop graph, there is sufficient multi-path redundancy (at least $r$ common neighbors) to verify the relayed messages. 

Conversely, when two agents are not connected in the $r$-2-hop graph, the number of neighbors they share is bounded by $r'$. As a result, this prevents Byzantine edge attacks from manipulating relayed messages to create asymmetric validation (e.g., agent $i$ having sufficient redundancy in messages from agent $j$ while the reverse does not hold). This bound therefore is intended to ensure the symmetry of the underlying communication, which is shown later in~\Cref{lem:filtered_undirected}. ~\hfill $\bullet$
\end{remark}

Now, we provide a useful property related to $(r,r')$-redundancy:

\begin{lemma}
Let $\Gcal_t=(\Vcal, \Ecal_t)$ be $(r,r')$-redundant at time $t$. Then, its $r$-2-hop graph $\Gcal^r_t=(\Vcal, \Ecal^r_t)$ is connected and undirected at time $t$.
\label{lem:undirected}
\end{lemma}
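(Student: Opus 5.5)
Connectivity is given directly by condition 1 of Definition~\ref{def:redundant}, so the only thing to establish is that $\Ecal^r_t$ is symmetric: $(i,j)\in\Ecal^r_t$ implies $(j,i)\in\Ecal^r_t$. By the definition of the $r$-2-hop graph, this amounts to showing that $|\Bcal_{i,t}\cap\Ncal_{j,t}|\ge r$ implies $|\Bcal_{j,t}\cap\Ncal_{i,t}|\ge r$. The plan is to compare these two counts directly, using only the structure of the simple undirected graph $\Gcal_t$.

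First I will decompose the two intersections. Since $\Bcal_{i,t}=\Ncal_{i,t}\cup\{i\}$,
\begin{align*}
\Bcal_{i,t}\cap\Ncal_{j,t}=(\Ncal_{i,t}\cap\Ncal_{j,t})\cup(\{i\}\cap\Ncal_{j,t}).
\end{align*}
The graph has no self-loops, so $i\notin\Ncal_{i,t}$, and the union is disjoint. Hence
\begin{align*}
|\Bcal_{i,t}\cap\Ncal_{j,t}|=|\Ncal_{i,t}\cap\Ncal_{j,t}|+\mathbbm{1}[i\in\Ncal_{j,t}].
\end{align*}
The same argument gives
\begin{align*}
|\Bcal_{j,t}\cap\Ncal_{i,t}|=|\Ncal_{i,t}\cap\Ncal_{j,t}|+\mathbbm{1}[j\in\Ncal_{i,t}].
\end{align*}

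Next I use undirectedness of $\Gcal_t$. We have $i\in\Ncal_{j,t}$ iff $(j,i)\in\Ecal_t$, iff $(i,j)\in\Ecal_t$, iff $j\in\Ncal_{i,t}$. So the two indicators coincide, and therefore $|\Bcal_{i,t}\cap\Ncal_{j,t}|=|\Bcal_{j,t}\cap\Ncal_{i,t}|$. In particular, the threshold condition $\ge r$ holds for $(i,j)$ exactly when it holds for $(j,i)$, which gives symmetry of $\Ecal^r_t$.

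The case $i=j$ is harmless: either self-loops are excluded from $\Ecal^r_t$ by convention, or the symmetric statement is trivial. With connectivity from condition 1, this completes the argument.

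There is no real obstacle here. The only point requiring care is the bookkeeping of the asymmetric set $\Bcal_{i,t}\cap\Ncal_{j,t}$, where $i$ is included on one side but $j$ is not. This is resolved by the disjoint decomposition above together with the fact that adjacency in $\Gcal_t$ is symmetric. Note that condition 2 (the $r'$ bound) is not needed for this lemma; it matters only later, for the filtered communication graph.
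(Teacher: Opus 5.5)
Your proposal is correct and follows the same route as the paper. Connectivity is read off condition 1 of Definition~\ref{def:redundant}, and symmetry of $\Ecal^r_t$ follows from $|\Bcal_{i,t}\cap\Ncal_{j,t}|=|\Bcal_{j,t}\cap\Ncal_{i,t}|$, which holds because $\Gcal_t$ is undirected. The paper only asserts this equality; your split into common neighbors plus the adjacency indicator supplies the justification it leaves implicit.
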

\begin{proof}
By~\Cref{def:redundant}, $\Gcal^r_t$ is connected. Furthermore, because $\Gcal_t$ is undirected, $|\Bcal_{i,t}\cap \Ncal_{j,t}|=|\Bcal_{j,t}\cap \Ncal_{i,t}|$ for any $i,j\in \Vcal$, $i\neq j$. Hence, $\Gcal^r_t$ is also undirected.
\end{proof}

\Cref{lem:undirected} characterizes the connectivity and symmetry properties of $(r,r')$-redundancy. These results provide the foundation for the subsequent analysis of the robust aggregation mechanism presented in the next subsection.

\subsection{Theoretical Analysis}

To further facilitate the convergence analysis of our algorithm, we define an induced communication graph:

\begin{definition}[Induced Communication Graph]
Let each agent execute FRAC-MARL and construct the filtered-neighbor set $\Mcal_{i,t} = \left\{k \in \Vcal\setminus\{i\}\mid \mathrm{mode\_count}(\Kcal_t^{i,k}) \ge \tau\right\}$, where multisets $\Kcal_t^{i,k}$ are constructed in line 6 of~\Cref{alg:frac}. The \textbf{induced communication graph} is
$\Gcal_t^{\rm ind}=(\Vcal,\Ecal_t^{\rm ind})$, 
where $
(i,k)\in\Ecal_t^{\rm ind}
\iff
k\in\Mcal_{i,t}$.
\end{definition}

The induced graph $\Gcal_t^{\rm ind}$ captures the information flow where agents receive the information for update even after filtering step (lines 7-10) in FRAC-MARL. 
In general, $\Gcal_t^{\rm ind}$ is a directed graph since it is possible that
$k\in\Mcal_{i,t}$ while $i\notin\Mcal_{k,t}$.

For each $t$, we define two weight matrices
$W_{\lambda, t}=[w_{\lambda, t}^{i,k}]\in\mathbb R^{n\times n}$ and $W_{v, t}=[w_{v, t}^{i,k}]\in\mathbb R^{n\times n}$ associated with
$\Gcal_t^{\rm ind}$ such that
\eqn{
w_{\lambda,t}^{i,k}=w_{v,t}^{i,k}=
\begin{cases}
\frac 1 n & \text{if }(i,k)\in\Ecal_t^{\rm ind} \\
1- \frac {|\Mcal_{i,t}|} n& \text{if }k=i \\
0 & \text{otherwise}
\end{cases}
\label{eq:weight_matrix}
}

\begin{lemma}
Let Assumption~\ref{assum:cooperative} hold.  
Let $\Gcal_t=(\Vcal, \Ecal_t)$ be $(r,r-2F-1)$-redundant where $r>2F$, and suppose that each agent $i\in \Vcal$ runs the FRAC-MARL algorithm with $\tau=r-F$ under an $F$-total Byzantine edge attack for all $t\in \Zbb_{\geq 0}$. Then, for all $t\in\Zbb_{\geq0}$, the induced communication graph $\Gcal^{\rm ind}_t$ is equal to the $r$-2-hop graph
$\Gcal^r_t=(\Vcal,\Ecal^r_t)$ of $\Gcal_t$, i.e. $\Ecal^{\rm ind}_t=\Ecal^r_t$. Furthermore, $\Gcal^{\rm ind}_t$ is connected and undirected.
    \label{lem:filtered_undirected}
\end{lemma}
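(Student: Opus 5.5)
We will prove $\Ecal^{\rm ind}_t=\Ecal^r_t$ by double inclusion, working at a fixed time $t$. The key quantity is the number of honest (unattacked) copies of $m_k(t)$ that reach agent $i$. Fix $i\neq k$.

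\emph{Honest channels.} The copies of $m_k(t)$ in $\Kcal^{i,k}_t$ can come from two kinds of relays $j\in\Ncal_{i,t}$. The first is $j=k$ itself, via round~1, when $k\in\Ncal_{i,t}$. The others are relays $j\in\Ncal_{i,t}\cap\Ncal_{k,t}$, via round~2. We therefore identify the potential channels with the set $\Bcal_{k,t}\cap\Ncal_{i,t}$, whose size equals $|\Bcal_{i,t}\cap\Ncal_{k,t}|$ by the symmetry argument in \Cref{lem:undirected}.

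Each channel through an intermediate $j$ uses the two edges $(k,j)$ in round~1 and $(j,i)$ in round~2. Since the intermediates are distinct, these channels are edge-disjoint. By Assumption~\ref{assum:cooperative}, every relay forwards faithfully. Hence an attacked edge in round~1 or round~2 can corrupt at most one channel from $k$ to $i$. We will state this carefully: a corrupted edge $(k,j)$ in round~1 spoils only the copy travelling via $j$, and a corrupted edge $(j,i)$ in round~2 spoils only the copy delivered by $j$. Consequently at most $F$ copies in $\Kcal^{i,k}_t$ differ from the true $m_k(t)$, and at most $F$ spurious copies can be injected. Any injected copy must arrive over some edge $(j,i)$ in round~2, which then counts as attacked, and each relay contributes at most one copy.

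\emph{Case $(i,k)\in\Ecal^r_t$.} Here at least $r$ channels exist and at least $r-F$ of them deliver the true $m_k(t)$. Any false value occupies at most $F$ slots. Since $r>2F$ gives $r-F>F$, the mode is the true message and its count is at least $r-F=\tau$. Hence $k\in\Mcal_{i,t}$ and, as a bonus, $\hat v^k_t=\tilde v^k_t$ and $\hat\lambda^k_t=\tilde\lambda^k_t$.

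\emph{Case $(i,k)\notin\Ecal^r_t$.} By $(r,r-2F-1)$-redundancy, at most $r-2F-1$ genuine channels exist. The adversary can raise the count of any single value to at most $(r-2F-1)+F=r-F-1<\tau$. This bound holds whether the adversary boosts the true value by adding fake relay copies over attacked round-2 edges $(j,i)$ with $j\in\Ncal_{i,t}$ not a genuine channel, or forges a consistent false value. Hence $k\notin\Mcal_{i,t}$.

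The main obstacle is the bookkeeping in this second case. We must verify that every copy beyond the genuine channels requires a distinct attacked edge, which holds because an honest non-neighbor relay of $k$ never forwards a message indexed $k$. We must also verify that each relay $j$ contributes at most one copy per origin by construction of $\Kcal^{i,k}_t$, so the total count of any value is at most the number of genuine channels plus $F$.

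Combining the two cases gives $\Ecal^{\rm ind}_t=\Ecal^r_t$. Connectivity and undirectedness of $\Gcal^{\rm ind}_t$ then follow directly from \Cref{lem:undirected}.
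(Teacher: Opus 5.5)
Your proposal is correct and follows essentially the same route as the paper: double inclusion, with at least $r-F$ faithful copies forcing $k\in\Mcal_{i,t}$ when $(i,k)\in\Ecal^r_t$, the bound $|\Kcal^{i,k}_t|\le(r-2F-1)+F<\tau$ forcing $k\notin\Mcal_{i,t}$ otherwise, and \Cref{lem:undirected} for connectivity and undirectedness. One small precision point concerns forged copies labeled $k$, which a round-1 attack can also produce (on $(j,i)$ directly, or on some $(l,j)$ whose corrupted message an honest $j$ then relays faithfully, so that the round-2 edge $(j,i)$ is not attacked under Definition~\ref{def:byzantine_edge_attack}); rather than claiming every injected copy arrives over an attacked round-2 edge, charge each extra copy to whichever attacked edge produced it, as the paper does, and since each attacked edge affects at most one relay's slot in $\Kcal^{i,k}_t$, your bound is unchanged.
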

\begin{proof}

By definition, $(i,k)\in\Ecal^{\rm ind}_t$ if and only if
$k\in\Mcal_{i,t}$, so it suffices to show
\begin{equation}
  (i,k)\in\Ecal^r_t \iff k\in\Mcal_{i,t}.
  \label{eq:lem2_target}
\end{equation} We show in three parts.

\textbf{Part 1 ($\Rightarrow$):}
Suppose first that $(i,k)\in\Ecal_t^r$. Then, $|\Bcal_{i,t}\cap\Ncal_{k,t}|\geq r$. By~\Cref{assum:cooperative}, all agents $i\in \Vcal$ generate and transmit the correct parameter pairs
$(\tilde v_t^i,\tilde\lambda_t^i)$ according to~\eqref{eq:local_update} (lines 1-3 in~\Cref{alg:frac}). Since at most $F$ relayed transmissions can be corrupted under an $F$-total Byzantine edge attack, agents $i$ and $k$ receive at least $r-F$ identical copies of each other's parameter tuple. Because FRAC-MARL uses the threshold $\tau=r-F$, both agents accept one another, i.e., we have $k\in\Mcal_{i,t}$.

\textbf{Part 2 ($\Leftarrow$):} Now assume to the contrary that $k\in\Mcal_{i,t}$ and $(i,k)\notin\Ecal_t^r$. Then since (i) $(i,k)\notin\Ecal_t^r$ and (ii) $\Gcal_t$ is $(r,r-2F-1)$-redundant, $|\Bcal_{i,t}\cap\Ncal_{k,t}|\le r-2F-1$.
A tuple claiming the origin of $k$ can reach agent $i$ in only two ways. First, along an
uncorrupted path: (i) agent $k$ broadcasts to $\Ncal_{k,t}$ in the first round, and
$j\in\Ncal_{i,t}\cap\Ncal_{k,t}$ forwards it to $i$ in the second round, and (ii) $i$ receives
it directly whenever $i\in\Ncal_{k,t}$; the number of such paths is exactly
$|\Bcal_{i,t}\cap\Ncal_{k,t}|$. Second, along a compromised edge: an attacker may alter a
transmission so that it carries a tuple labeled with origin $k$, whether or not the sender
ever received one. Each compromised edge contributes at most one such tuple to
$\Kcal_t^{i,k}$, and at most $F$ edges are compromised in total. Hence
\eqnN{
|\Kcal_t^{i,k}|\le|\Bcal_{i,t}\cap\Ncal_{k,t}|+F\le(r-2F-1)+F<\tau=r-F,
}
so $\mathrm{mode\_count}(\Kcal_t^{i,k})\le|\Kcal_t^{i,k}|<\tau$ and $k\notin\Mcal_{i,t}$, which is a contradiction.

\textbf{Part 3:} Combining Parts 1 and 2, we have shown that~\eqref{eq:lem2_target} holds. Thus, $\Ecal^{\rm ind}_t=\Ecal^r_t$. Also, together with~\Cref{lem:undirected}, this implies $\Gcal^{\rm ind}_t$ is undirected and connected. 
\end{proof}

\Cref{lem:filtered_undirected} shows that the (i) communication graph induced by the FRAC-MARL filtering mechanism coincides with the $r$-2-hop graph of the underlying communication
graph $\Gcal_t$, and thus (ii) is undirected and connected at every $t$. Thus, the resulting weight matrices $W_{v,t}$ and $W_{\lambda,t}$ defined in~\eqref{eq:weight_matrix} are doubly stochastic. Also their nonzero elements are uniformly bounded below by $1/n$.

\begin{lemma}
\label{lem:filtered_out}
Let~\Cref{assum:cooperative} hold.  
Let $\Gcal_t=(\Vcal, \Ecal_t)$ be $(r,r-2F-1)$-redundant where $r>2F$, and suppose that each agent $i\in \Vcal$ runs the FRAC-MARL algorithm with $\tau=r-F$ under an $F$-total Byzantine edge attack for all $t\in \Zbb_{\geq 0}$. Then, for any time $t\in \Zbb_{\geq 0}$, agent $i \in \Vcal$, and every accepted agent indices $k\in\Mcal_{i,t}$, we have $
(\hat v_t^k,\hat\lambda_t^k)
=
(\tilde v_t^k,\tilde\lambda_t^k)$,
where $(\hat v_t^k,\hat\lambda_t^k)$ and $(\tilde v_t^k,\tilde\lambda_t^k)$ are defined in~\eqref{eq:mode} and~\eqref{eq:local_update}, respectively.
Consequently, the consensus update laws~\eqref{eq:averaging} depend only on true local
parameter estimates of agents and are independent of Byzantine-modified
transmissions.
\end{lemma}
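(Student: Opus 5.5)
By \Cref{lem:filtered_undirected}, every accepted index $k\in\Mcal_{i,t}$ satisfies $(i,k)\in\Ecal^r_t$, so $|\Bcal_{i,t}\cap\Ncal_{k,t}|\ge r$. The plan is to show that in this case the true tuple $(\tilde v_t^k,\tilde\lambda_t^k,k)$ is the unique mode of $\Kcal^{i,k}_t$, so that line~\eqref{eq:mode} returns it regardless of tie-breaking. The argument is a counting argument on copies in $\Kcal^{i,k}_t$. It splits them into copies that travelled over uncorrupted paths and copies affected by at least one compromised transmission.

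\textbf{Step 1: lower bound on correct copies.} By \Cref{assum:cooperative}, agent $k$ sends the correct $m_k(t)$ in round 1, and every honest relay $j\in\Ncal_{i,t}\cap\Ncal_{k,t}$ forwards exactly what it received. Each element of $\Bcal_{i,t}\cap\Ncal_{k,t}$ gives a distinct path of length at most two from $k$ to $i$:
\begin{itemize}
\item the direct edge $(k,i)$ if $i\in\Ncal_{k,t}$;
\item otherwise $k\to j\to i$ for $j\in\Ncal_{i,t}\cap\Ncal_{k,t}$.
\end{itemize}
These paths contribute to distinct relaying slots of $\Kcal^{i,k}_t$, since at most one copy is kept per relaying agent. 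Each path uses at most two transmissions, and each compromised transmission lies on at most one of these paths, because the paths share no edges. Corrupting one transmission can therefore spoil at most one path. With at most $F$ corrupted transmissions in total, at least $r-F$ copies in $\Kcal^{i,k}_t$ equal the true tuple.

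\textbf{Step 2: upper bound on incorrect copies.} Any element of $\Kcal^{i,k}_t$ that differs from the true tuple must have passed through a corrupted transmission on its way to $i$. I would argue that each corrupted transmission can place at most one wrong tuple labelled $k$ into $\Kcal^{i,k}_t$:
\begin{itemize}
\item a corrupted round-1 edge $(k,j)$ affects only the copy $j$ later relays;
\item a corrupted round-2 edge $(j,i)$ affects only slot $j$;
\item a corrupted round-1 edge $(j',j)$ with $j'\neq k$ carries no $k$-labelled tuple into $i$ unless $j'=k$.
\end{itemize}
A fabricated $k$-labelled tuple inserted on an edge into $i$ also occupies just one slot. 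Hence at most $F$ copies are incorrect, and in particular at most $F$ copies share any single wrong value.

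\textbf{Step 3: conclusion.} Since $r>2F$, we have $r-F>F$. The true tuple therefore appears strictly more often than any other value, is the unique mode, and $(\hat v_t^k,\hat\lambda_t^k)=(\tilde v_t^k,\tilde\lambda_t^k)$. Substituting into~\eqref{eq:averaging} shows that the consensus update uses only the true $\tilde v_t^k,\tilde\lambda_t^k$ for $k\in\Mcal_{i,t}$, which gives the final claim.

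The main obstacle is the bookkeeping in Step 2. A round-1 corruption of $(k,j)$ and a round-2 corruption of $(j,i)$ could hit the same slot; this only helps, but it must not be double-counted in Step 1. The rule of keeping one copy per relay must also be applied carefully, so that an attacker forging several tuples on one edge still contributes only one element. I would handle both points with an explicit injection from the set of corrupted slots into the set of corrupted transmissions.
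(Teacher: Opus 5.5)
Your argument is correct but longer than the paper's. The paper never places $k$ in $\Gcal^r_t$ and never lower-bounds the number of correct copies. It argues by contradiction: since $k\in\Mcal_{i,t}$, the selected mode occurs at least $\tau=r-F$ times, any altered value occurs at most $F$ times, and $r-F>F$.

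That contradiction step needs only your Step~2 count together with $\tau>F$. It does not need \Cref{lem:filtered_undirected}, nor the $r'=r-2F-1$ half of the redundancy hypothesis, which you bring in through Part~2 of that lemma. It is also independent of tie-breaking, because any value attaining $\mathrm{mode\_count}\ge\tau$ must be the true tuple. Your Step~1 essentially re-proves Part~1 of \Cref{lem:filtered_undirected}. What it gains is the slightly stronger fact that the true tuple is the strict mode of $\Kcal^{i,k}_t$ for every $k$ adjacent to $i$ in $\Gcal^r_t$, which this lemma does not need.

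Your third bullet in Step~2 contradicts itself, and the intended claim is false. Under \Cref{def:byzantine_edge_attack}, an attacker on a round-1 edge $(j',j)$ may rewrite the index field to $k$; the paper explicitly allows this in the proof of \Cref{lem:filtered_undirected}. The honest relay $j$ then forwards a $k$-labelled tuple to $i$. For the same reason, the Step~1 claim that a corruption spoils only a path it lies on is not the right invariant: an off-path corruption can spoil the honest slot $j$ through the arbitrary-retention rule.

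Neither problem breaks the count. Every corrupted transmission still affects at most one slot of $\Kcal^{i,k}_t$, namely the slot of the agent that delivers it to $i$. That is exactly the injection you propose at the end, and it gives both at least $r-F$ correct copies and at most $F$ incorrect ones.
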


\begin{proof}

By~\Cref{assum:cooperative}, all agents $k\in \Vcal$ generate parameter pairs
$(\tilde v_t^k,\tilde\lambda_t^k)$ according to~\eqref{eq:local_update} and transmit them. Also, since $\Gcal_t$ is $(r,r-2F-1)$-redundant and is under $F$-total Byzantine edge attack, any pair of agents connected in the $r$-2-hop graph has at least $r-F$ independent relayed copies of the correct message at each time step $t$. 

Now assume to the contrary that 
$(\hat v_t^k,\hat\lambda_t^k)\neq(\tilde v_t^k,\tilde\lambda_t^k)$. Acceptance requires at least $\tau=r-F$ identical occurrences. Since only $F$ transmissions can be corrupted in total, at most $F$ copies of any altered value can exist. However, since $r>2F$, we get $\tau = r-F > F$, which leads to a contradiction.
\end{proof}

Using these lemmas, we now provide our main results.
To establish almost sure convergence under linear function approximation with \Cref{assump:features}, we formalize the following standard regularity conditions regarding the reward bounds, learning step sizes, and parameter spaces.

\begin{assumption}
\label{assum:markov}
For every agent $i\in\Vcal$, state $s\in\Scal$, action $a^i\in\Acal^i$, and parameter $\theta^i\in\Theta^i$, the local policy $\pi_{\theta^i}^i(a^i\mid s)$ is positive and continuously differentiable with respect to $\theta^i$ over $\Theta^i$.
\end{assumption}

\begin{assumption}\label{assum:reward_bound}The local reward function $r^i(s, a)$ is uniformly bounded for every agent $i \in \Vcal$, state $s \in \Scal$, and joint action $a \in \Acal$.\end{assumption}

\begin{assumption}\label{assum:step_sizes} The stochastic approximation step sizes $\alpha_t^{v}$, $\alpha_t^{\lambda}$, and $\alpha_t^{\theta}$ are positive sequences satisfying:\eqn{\sum_{t=0}^{\infty} \alpha_t^q = \infty, \quad \sum_{t=0}^{\infty} (\alpha_t^{q})^2 < \infty, \quad \forall q \in \{v, \lambda, \theta\},}along with the two-time-scale condition $\alpha_t^\theta = o(\alpha_t^{v})=o(\alpha_t^{\lambda})$, and the limit tracking property $\lim_{t \to \infty} \alpha_{t+1}^q/\alpha_{t}^q = 1$.\end{assumption}

\begin{assumption}\label{assum:projection}For agent $i \in \Vcal$, the parameter space $\Theta^i \subset \R^{b_i}$ is compact and hyper-rectangular. The policy parameter update includes a projection operator $\Psi_{\Theta^i} : \R^{b_i} \rightarrow \Theta^i$ that maps any exterior iterate back onto the boundary of $\Theta^i$.\end{assumption}

Assumptions~\ref{assum:markov} and~\ref{assum:reward_bound} provide the regularity and boundedness conditions needed for the policy-gradient and TD-error terms to be well defined. \Cref{assum:step_sizes} establishes the two-time-scale
stochastic approximation framework, which is common in actor-critic setting~\cite{bhatnagar2009natural}. Since the actor parameters $\theta_t$ change asymptotically more slowly than the critic and reward parameters $v_t$ and $\lambda_t$, $v_t$ and $\lambda_t$ evolve on a faster timescale than $\theta_t$, allowing the critic and reward parameter estimates to track their equilibria for the current policy while the actor parameters are effectively quasi-static. Finally, \Cref{assum:projection} ensures that projection operator $\Psi_{\Theta^i}$ satisfies the conditions required by the projected stochastic approximation analysis used to establish convergence of the limiting ODEs, following the analysis of~\cite{zhang2018fully,ye2024resilient}. Using $\Psi_{\Theta^i}$, for $y\in\R^{b_i}$
and $\theta^i\in\Theta^i$, we further define
\eqn{\label{eq:tangent_projection}
\hat\Psi_{\Theta^i}[y]
:=\lim_{0<\eta\to 0}\frac{\Psi_{\Theta^i}(\theta^i+\eta y)-\theta^i}{\eta},
}
the projection of $y$ onto the tangent cone of $\Theta^i$ at $\theta^i$, which
reduces to $y$ whenever $\theta^i$ lies in the interior of $\Theta^i$. When the limit  in~\eqref{eq:tangent_projection} is not unique, we define $\hat\Psi_{\Theta^i}[y]$ as the set of all possible limit points.

We first show the convergence of the critic and team reward approximation functions. Let $\mathbf{D}^{\theta}_{s} = {\rm diag}\left(\begin{bmatrix}
    d_{\theta}(s), \ s\in \Scal
\end{bmatrix}\right) \in \R^{|\Scal|\times |\Scal|}$ and $\mathbf{D}^{\theta}_{s,a} = {\rm diag}\left(\begin{bmatrix}
    d_{\theta}'(s,a), \ s\in \Scal, a \in \Acal
\end{bmatrix}\right) \in \R^{|\Scal|\cdot|\Acal|\times |\Scal|\cdot|\Acal|}$ denote the diagonal matrices corresponding to the state distribution and state-action distribution induced by policy $\pi_\theta$, respectively. Also, we denote $\mathbf{R}=\begin{bmatrix}
    \bar r(s, a), \ s\in \Scal, a \in \Acal
\end{bmatrix}^\top \in \R^{|\Scal|\cdot|\Acal|}$ and $\mathbf{R}_\theta=\begin{bmatrix}
    \sum_{a \in \Acal} \pi_{\theta}(a\mid s)\bar r(s, a), \ s\in \Scal
\end{bmatrix}^\top \in \R^{|\Scal|}$. We denote $\mathbf{P}_\theta = \begin{bmatrix}
    P_\theta(s'\mid s), s\in \Scal, s'\in \Scal
\end{bmatrix}\in \R^{|\Scal|\times |\Scal|}$.

\begin{theorem}
\label{thm:critic}
\label{thm:actor}
    Let Assumptions~\ref{assump:features}-\ref{assum:step_sizes} hold and fix
    $\theta\in\Theta$.
    Suppose that, for every $t\in\mathbb{Z}_{\geq 0}$, (i) the graph
    $\Gcal_t=(\Vcal,\Ecal_t)$ is $(r,r-2F-1)$-redundant where $r>2F$, and
    (ii) every agent $i\in\Vcal$ selects $a^i_t\sim\pi^i_{\theta^i}(\cdot\mid s_t)$
    and updates $\{v^i_t\}$ and $\{\lambda^i_t\}$ via~\eqref{eq:local_update} and~\eqref{eq:averaging} of FRAC-MARL with threshold $\tau=r-F$ and
    consensus weights~\eqref{eq:weight_matrix}, under an $F$-total Byzantine edge attack.
    Then
\begin{equation}
    \lim_{t\to\infty} v_t^i = v_\theta,
    \qquad
    \lim_{t\to\infty} \lambda_t^i = \lambda_\theta,
    \quad \text{for all } i\in\Vcal,
\end{equation}
almost surely, where $\lambda_\theta$ and $v_\theta$ are the unique 
solutions to
\seqn[\label{eq:equilibria_space}]{
    \mathbf{F}^\top \mathbf{D}_{s,a}^{\theta}
    (\mathbf{R}-\mathbf{F}\lambda_\theta)&=0,
    \label{eq:lambda}
    \\
    \mathbf{\Phi}^\top \mathbf{D}_s^\theta
    (\mathbf{R}_\theta+\gamma \mathbf{P}_\theta\mathbf{\Phi} v_\theta-\mathbf{\Phi} v_\theta)&=0.
    \label{eq:value}
}
\end{theorem}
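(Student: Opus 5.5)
The plan is to reduce FRAC-MARL, under the stated hypotheses, to the attack-free consensus-based critic/reward update of~\cite{zhang2018fully}. Once that reduction holds, the stochastic-approximation analysis there can be invoked with only the graph-related assumptions to re-check. The reduction comes from combining \Cref{lem:filtered_undirected} and \Cref{lem:filtered_out}.

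First, for every $t$, \Cref{lem:filtered_out} gives $(\hat v^k_t,\hat\lambda^k_t)=(\tilde v^k_t,\tilde\lambda^k_t)$ for all $k\in\Mcal_{i,t}$. Next, \Cref{lem:filtered_undirected} gives $\Mcal_{i,t}=\{k:(i,k)\in\Ecal^r_t\}$, so~\eqref{eq:averaging} with weights~\eqref{eq:weight_matrix} becomes exactly
\[
v_{t+1}=(W_{v,t}\otimes I_L)\tilde v_t,\qquad \lambda_{t+1}=(W_{\lambda,t}\otimes I_M)\tilde\lambda_t,
\]
where $W_{v,t}=W_{\lambda,t}$ is determined by the undirected, connected graph $\Gcal^r_t$. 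This identity holds pathwise, for every realization of the attack, so the iterates coincide with those of the attack-free algorithm run over the time-varying graph sequence $\{\Gcal^r_t\}$. Thus the Byzantine perturbation vanishes identically rather than acting as a bounded noise term.

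Second, I verify that $\{W_{v,t}\}$ satisfies the consensus-matrix conditions used in~\cite{zhang2018fully}, namely:
\begin{itemize}
\item[(a)] row stochasticity, which is immediate from~\eqref{eq:weight_matrix};
\item[(b)] column stochasticity, which follows from symmetry since $\Gcal^r_t$ is undirected;
\item[(c)] nonzero entries bounded below by $1/n$, because $|\Mcal_{i,t}|\le n-1$ gives diagonal weights at least $1/n$;
\item[(d)] connectivity of $\Gcal^r_t$ for every $t$.
\end{itemize}
Property (d) implies the standard contraction $\|\Ebb[W_t^\top(I-\mathbf{1}\mathbf{1}^\top/n)W_t\mid\cdot]\|<1$ uniformly in $t$. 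The graph sequence is determined by $\Gcal_t$, which is independent of the Markov samples given the past, as in~\cite{zhang2018fully}. The remaining assumptions needed there (bounded features and rewards, full-rank $\mathbf F,\mathbf\Phi$, irreducible and aperiodic chain, step sizes) are Assumptions~\ref{assump:features}, \ref{assum:markov}--\ref{assum:step_sizes}.

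Third, I run the two-step argument of~\cite{zhang2018fully} (Theorem 4.6 there) for fixed $\theta$. The first step is consensus: write $v_t=\mathbf 1\otimes\langle v_t\rangle+v_{\perp,t}$ and show $v_{\perp,t}\to0$ a.s., using the contraction from (d) together with boundedness of iterates. Because the update is linear and the features and rewards are bounded, iterate boundedness follows via the Borkar--Meyn ODE criterion. The second step is the average: by double stochasticity, $\langle v_t\rangle$ follows an SA recursion whose mean field is the averaged TD update. Its ODE, $\dot v=\mathbf\Phi^\top\mathbf D_s^\theta(\mathbf R_\theta+\gamma\mathbf P_\theta\mathbf\Phi v-\mathbf\Phi v)$, has the unique globally asymptotically stable equilibrium $v_\theta$. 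Uniqueness and stability come from full rank of $\mathbf\Phi$ and the negative definiteness of $\mathbf\Phi^\top\mathbf D(\gamma\mathbf P-I)\mathbf\Phi$. The reward ODE, $\dot\lambda=\mathbf F^\top\mathbf D_{s,a}^\theta(\mathbf R-\mathbf F\lambda)$, similarly gives $\lambda_\theta$. The average of the local rewards $r^i$ gives $\bar r$, which yields $\mathbf R$ and $\mathbf R_\theta$.

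I expect the main obstacle to be rigor in the reduction step rather than the stochastic approximation. The attack can depend on the whole history, so I must argue that $W_{v,t}$ is nonetheless a deterministic function of $\Gcal_t$ alone, since the induced edge set equals $\Ecal^r_t$ regardless of the attack. This makes the attack irrelevant to the filtration and martingale-difference structure. I also need that $v_t$ is adapted to the same filtration as in the attack-free analysis. After that, the convergence theorem of~\cite{zhang2018fully} applies verbatim.
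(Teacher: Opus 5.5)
Your proposal is correct and follows essentially the same route as the paper. Both arguments do a pathwise reduction via \Cref{lem:filtered_out} and \Cref{lem:filtered_undirected} to attack-free consensus over $\Gcal^r_t$ with doubly stochastic $W_{v,t}=W_{\lambda,t}$, and then run boundedness, vanishing disagreement, and the averaged-ODE argument with a Hurwitz (negative definite) matrix. The only difference is that the paper gets boundedness and consensus by matching the stacked recursion to \cite[Lemmas~1 and~3]{figura2021adversarial} instead of Borkar--Meyn, which would not apply verbatim to $v_{t+1}=(W_{v,t}\otimes I)(v_t+\alpha^v_t(\cdot))$ because the mixing step is not scaled by $\alpha^v_t$.
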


\begin{proof}
By~\Cref{lem:filtered_out}, all Byzantine-modified transmissions are
removed by FRAC-MARL before the consensus update. Therefore, the critic
and reward parameter updates are equivalent to those of a standard
decentralized actor-critic algorithm operating over the induced
communication graph $\Gcal_t^{\rm ind}$.

Furthermore, by~\Cref{lem:filtered_undirected}, the induced graph is
connected and undirected. This means the corresponding weight matrices $W_{\lambda,t}$ and $W_{v,t}$ as defined in~\eqref{eq:weight_matrix} are
doubly stochastic by construction. Because the weight matrices are doubly stochastic with uniformly positive diagonal entries and satisfy the required connectivity condition, we get spectral radii
$\sup_{t}\rho\big(W_{q,t}(I-\tfrac1n\mathbf 1\mathbf 1^\top)W_{q,t}\big)<1$ for $q\in\{v,\lambda\}$. Furthermore, $\Gcal_t$ is
independent of the sample path $\{(s_{t'},a_{t'},r_{t'+1})\}_{t'\ge0}$ and
the iterates $\{v^i_{t'},\lambda^i_{t'},\theta^i_{t'}\}_{{t'}\ge0,\,i\in\Vcal}$. Then because $W_{\lambda,t}$ and $W_{v,t}$ are determined by $\Gcal_t$ alone, $W_{\lambda,t}$ and $W_{v,t}$
satisfy~\cite[Assumption~4]{figura2021adversarial}.

We first stack the parameters into 
\eqnN{
\lambda_t=
\begin{bmatrix}
(\lambda_t^1)^\top &
\cdots &
(\lambda_t^n)^\top
\end{bmatrix}^\top, \ v_t=
\begin{bmatrix}
(v_t^1)^\top &
\cdots &
(v_t^n)^\top
\end{bmatrix}^\top.
}
Then the update laws for both parameters can be compactly written as
\seqn[\label{eq:recursion}]{
\lambda_{t+1}
=
(W_{\lambda,t}\otimes I)
\left(
\lambda_t+
\alpha_t^\lambda
(A_t^\lambda\lambda_t+b_t^\lambda)
\right) \\
v_{t+1}
=
(W_{v,t}\otimes I)
\left(
v_t+
\alpha_t^v
(A_t^vv_t+b_t^v)
\right),
}
where
\eqnN{
A_t^\lambda
& =
-I\otimes
f(s_t,a_t)f(s_t,a_t)^\top,\\
A_t^v
& =
I\otimes
\phi(s_t)
(\gamma\phi(s_{t+1})-\phi(s_t))^\top, 
}
and
\eqnN{
b_t^\lambda=
\begin{bmatrix}
f(s_t,a_t)r_{t+1}^1\\
\vdots\\
f(s_t,a_t)r_{t+1}^n
\end{bmatrix}, \ b_t^v=
\begin{bmatrix}
\phi(s_t)r_{t+1}^1\\
\vdots\\
\phi(s_t)r_{t+1}^n
\end{bmatrix}.
}

The compact form~\eqref{eq:recursion} is mathematically in the same form as those defined in~\cite[Lemmas~1 and 3]{figura2021adversarial}. By \cite[Lemma~1]{figura2021adversarial}, $\sup_t\|\lambda_t\|<\infty$ and $\sup_t\|v_t\|<\infty$
almost surely, and by \cite[Lemma~3]{figura2021adversarial},
$\lim_{t\to\infty}\|\lambda_t-\mathbf{1}\otimes\bar{\lambda}_t\|
=\lim_{t\to\infty}\|v_t-\mathbf{1}\otimes\bar{v}_t\|=0$ almost surely, where
\eqnN{
\bar{\lambda}_t
=
\frac1n(\mathbf1^\top\otimes I)\lambda_t,
\qquad
\bar{v}_t
=
\frac1n(\mathbf1^\top\otimes I)v_t.
}
Premultiplying~\eqref{eq:recursion} by $\tfrac1n(\mathbf1^\top\otimes I)$ and using
$\mathbf1^\top W_{\lambda,t}=\mathbf1^\top W_{v,t}=\mathbf1^\top$ yields the closed
recursions
\eqnN{
\bar{\lambda}_{t+1}
& =
\bar{\lambda}_t
+
\alpha_t^\lambda
\left(
-f(s_t,a_t)f(s_t,a_t)^\top\bar{\lambda}_t
+
f(s_t,a_t)\bar r_{t+1}
\right),\\
\bar{v}_{t+1}
& =
\bar{v}_t
+
\alpha_t^v
\left(
\phi(s_t)(\gamma\phi(s_{t+1})-\phi(s_t))^\top\bar{v}_t
+
\phi(s_t)\bar r_{t+1}
\right),
}
where $\bar r_{t+1}=\frac1n\sum_{i\in\Vcal}r_{t+1}^i$ is the team-averaged realized
reward. By~\cite[Appendix B.4 Step 2]{zhang2018fully}, under Assumptions~\ref{assump:features},~\ref{assum:markov},~\ref{assum:reward_bound},~\ref{assum:step_sizes}, and that since $\{(s_t,a_t)\}$ is irreducible and aperiodic with
stationary distribution $d_\theta'$, the asymptotic behaviors of $\bar{\lambda}_t$ and $\bar{v}_t$ are described by
\seqn[\label{eq:odes}]{
\dot\lambda
& =
-\mathbf F^\top \mathbf{D}_{s,a}^{\theta}\mathbf F\lambda
+
\mathbf F^\top \mathbf{D}_{s,a}^{\theta}\mathbf{R},\\
\dot v
& =
\mathbf\Phi^\top \mathbf{D}_s^\theta
(\gamma \mathbf{P}_\theta-I)\mathbf\Phi v
+
\mathbf\Phi^\top \mathbf{D}_s^\theta\mathbf{R}_\theta.
}
Since
$-\mathbf F^\top \mathbf{D}_{s,a}^{\theta}\mathbf F$
and
$\mathbf\Phi^\top \mathbf{D}_s^\theta(\gamma \mathbf{P}_\theta-I)\mathbf\Phi$
are Hurwitz under
Assumptions~\ref{assump:features} and~\ref{assum:markov},
the ODEs~\eqref{eq:odes} admit the unique globally asymptotically stable
equilibria $\lambda_\theta$ and $v_\theta$, respectively (and thus solutions to~\eqref{eq:equilibria_space}). Combined with $\sup_t\|\lambda_t\|<\infty$ and
$\sup_t\|v_t\|<\infty$, this gives
$\bar{\lambda}_t\to\lambda_\theta$ and $\bar{v}_t\to v_\theta$ almost
surely. Therefore, since
$\|\lambda_t^i-\bar{\lambda}_t\|\le\|\lambda_t-\mathbf1\otimes\bar{\lambda}_t\|$,
\eqnN{
\|\lambda_t^i-\lambda_\theta\|
\le
\|\lambda_t^i-\bar{\lambda}_t\|
+
\|\bar{\lambda}_t-\lambda_\theta\|
\rightarrow0,
}
and similarly,
\eqnN{
\|v_t^i-v_\theta\|
\le
\|v_t^i-\bar{v}_t\|
+
\|\bar{v}_t-v_\theta\|
\rightarrow0
}
almost surely. Thus, $
\lim_{t\to\infty}\lambda_t^i=\lambda_\theta$ and $\lim_{t\to\infty}v_t^i=v_\theta$, 
almost surely for every $i\in\Vcal$, completing the proof.
\end{proof}

\Cref{thm:critic} depends on Assumptions~\ref{assump:features}-\ref{assum:step_sizes}. Except for~\Cref{assum:cooperative}, all other assumptions are standard in reinforcement learning and can be applied to a broad class of multi-agent systems, including multi-robot task allocation, navigation, and formation control, where agents employ stochastic policies with bounded rewards over a prescribed operating domain. We discuss the implications of~\Cref{assum:cooperative} in more detail later in this section.

\Cref{thm:critic} establishes that, despite Byzantine edge attacks, the proposed algorithm enables all agents to reach consensus on the critic and team-averaged reward approximations almost surely. Specifically, the parameter estimates converge to $\lambda_\theta$ and $v_\theta$, which are, respectively, the least-squares approximation of the global reward function $\bar r$ and the unique solution to the mean square projected Bellman equation under the linear function approximation.

Having established convergence of the critic and team-reward parameter estimates, we next discuss the convergence of the actor in a slower timescale:

\begin{theorem}
\label{thm:actor}
Let Assumptions~\ref{assump:features}-\ref{assum:projection} hold. Suppose that, for every $t\in\mathbb{Z}_{\geq 0}$, (i) the graph $\Gcal_t=(\Vcal,\Ecal_t)$ is $(r,r-2F-1)$-redundant where $r>2F$, and (ii) every agent $i\in\Vcal$ executes the FRAC-MARL algorithm with threshold $\tau=r-F$ and consensus weights~\eqref{eq:weight_matrix}, under an $F$-total Byzantine edge attack. Then, for every agent $i\in\Vcal$, the policy parameter $\theta^i_t$ converges almost surely to a point in the set of locally asymptotically stable equilibria of the ordinary differential equation
\eqn{\label{eq:actor_odes}
\dot{\theta}^i=\hat{\Psi}_{\Theta^i}\left[\mathbb{E}_{d_\theta, \pi_\theta, P}
\left[\delta_\theta(s_t, a_t, s_{t+1})\nabla_{\theta^i}
\log \pi^i_{\theta^i}(a^i_t \mid s_t)\right]\right],
}
with
\eqnN{\delta_{\theta}(s_t,a_t,s_{t+1})=\bar{r}(s_t,a_t;\lambda_{\theta})+\gamma V(s_{t+1};v_\theta)-V(s_t;v_{\theta}),}
where parameters $\lambda_{\theta}$ and $v_{\theta}$ are the globally asymptotically stable equilibria under the global policy $\pi_\theta$.
\end{theorem}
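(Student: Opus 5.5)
The plan is a two-time-scale stochastic approximation argument: reduce the attacked actor recursion to the attack-free one of~\cite{zhang2018fully,ye2024resilient}, then invoke the projected stochastic approximation analysis. The first step is to note that the actor update (line 2 of \Cref{alg:frac}) uses no communicated quantity at all. It depends only on $\theta^i_t$, $\lambda^i_t$, $v^i_t$ and the commonly observed $(s_t,a_t,s_{t+1})$. Byzantine edge attacks can therefore influence $\theta^i_t$ only through $\lambda^i_t$ and $v^i_t$. By \Cref{lem:filtered_out} and \Cref{lem:filtered_undirected}, for every $t$ the consensus step~\eqref{eq:averaging} coincides with the attack-free update over the connected, undirected graph $\Gcal^{\rm ind}_t=\Gcal^r_t$, with doubly stochastic weights~\eqref{eq:weight_matrix} whose nonzero entries are bounded below by $1/n$. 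Hence the joint process $\{(\theta_t,\lambda_t,v_t)\}$ is pathwise identical to the process generated by the attack-free algorithm of~\cite{zhang2018fully} over $\{\Gcal^r_t\}$, which satisfies~\cite[Assumption~4]{figura2021adversarial}.

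Next I would handle the fast timescale with $\theta$ now varying. Since $\alpha^\theta_t=o(\alpha^v_t)=o(\alpha^\lambda_t)$ (\Cref{assum:step_sizes}) and $\theta_t$ stays in the compact set $\Theta$ by \Cref{assum:projection}, the standard two-time-scale lemma lets me treat $\theta$ as quasi-static in the critic recursion~\eqref{eq:recursion}. Repeating the argument of \Cref{thm:critic} gives $\sup_t\|\lambda_t\|,\sup_t\|v_t\|<\infty$ a.s., consensus $\lambda^i_t-\bar\lambda_t\to0$ and $v^i_t-\bar v_t\to0$, and tracking $\|\bar\lambda_t-\lambda_{\theta_t}\|\to0$ and $\|\bar v_t-v_{\theta_t}\|\to0$ a.s. This uses the continuity of $\theta\mapsto(\lambda_\theta,v_\theta)$, which follows from \Cref{assum:markov}: $d_\theta$ and $P_\theta$ depend continuously on $\theta$, and the matrices in~\eqref{eq:equilibria_space} are invertible by \Cref{assump:features}.

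On the slow timescale I would decompose the actor step. Write
\[
\delta^i_t\nabla_{\theta^i}\log\pi^i_{\theta^i_t}(a^i_t\mid s_t)
=\Ebb_{d_{\theta_t},\pi_{\theta_t},P}\big[\delta_{\theta_t}\nabla_{\theta^i}\log\pi^i_{\theta^i_t}\big]+\zeta^{i}_{t,1}+\zeta^{i}_{t,2}.
\]
Here $\zeta^{i}_{t,2}$ is the error from replacing $(\lambda^i_t,v^i_t)$ by $(\lambda_{\theta_t},v_{\theta_t})$. It vanishes a.s. by the previous step together with the boundedness of the features (\Cref{assump:features}) and of the score function (compact $\Theta$ plus \Cref{assum:markov}). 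The term $\zeta^{i}_{t,1}$ is the Markov-noise term. I would handle it via the Poisson-equation decomposition used in~\cite[Appendix B.5]{zhang2018fully}, relying on the ergodicity of the chain induced by $\pi_\theta$ and on $\sum(\alpha^\theta_t)^2<\infty$, so that $\sum_t\alpha^\theta_t\zeta^{i}_{t,1}$ converges a.s.

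With both error terms controlled, the projected stochastic approximation theorem (Kushner--Clark) applies under \Cref{assum:projection}. It shows that $\theta^i_t$ tracks the projected ODE~\eqref{eq:actor_odes} and converges a.s. to its set of asymptotically stable equilibria, as in~\cite[Theorem~2]{zhang2018fully} and~\cite{ye2024resilient}.

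I expect the main obstacle to be the coupling step: justifying the fast-timescale tracking when $\theta_t$ drifts and the graphs $\Gcal_t$ vary in time. My approach would be to verify that the stacked recursion meets exactly the hypotheses of~\cite[Lemmas~1,~3]{figura2021adversarial} uniformly in $\theta\in\Theta$. Those hypotheses are the spectral bound $\sup_t\rho(W_{q,t}(I-\tfrac1n\mathbf 1\mathbf 1^\top)W_{q,t})<1$ from \Cref{lem:filtered_undirected}, and the independence of $W_{q,t}$ from the sample path. The latter holds because, by \Cref{lem:filtered_undirected}, the filtered graph equals $\Gcal^r_t$ regardless of the attacker's choices. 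Once these are verified, I would cite the two-time-scale result rather than re-derive it.
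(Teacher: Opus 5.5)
Your proposal is correct and follows essentially the same route as the paper. Both arguments reduce to the attack-free process via \Cref{lem:filtered_undirected} and \Cref{lem:filtered_out}, obtain fast-timescale tracking from Lemmas~1 and~3 of \cite{figura2021adversarial}, show that the bias from using $(v^i_t,\lambda^i_t)$ in place of $(v_{\theta_t},\lambda_{\theta_t})$ vanishes, and invoke the projected two-time-scale results of \cite{zhang2018fully,ye2024resilient}; the only real difference is that you make the Markov-noise (Poisson-equation) term explicit, whereas the paper absorbs it into the cited theorems.

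One minor caveat concerns your justification of the independence of $W_{q,t}$ from the sample path. The lemmas only remove dependence on the attacker's choices, so this independence also requires $\Gcal_t$ itself to be chosen independently of the trajectory, which the paper simply posits.
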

\begin{proof}
The proof proceeds by combining the convergence result established in
\Cref{thm:critic} with the analysis developed
in~\cite[Theorem~4.10]{zhang2018fully}
and~\cite[Theorem~6]{ye2024resilient}. The actor update of each agent can be written as
\eqn{
\label{eq:first_actor_update}
\theta_{t+1}^i
=
\Psi_{\Theta^i}
\left(
\theta_t^i+
\alpha_t^\theta \cdot
\delta_t^i  \cdot
\nabla_{\theta^i}
\log \pi^i_{\theta^i_t}(a^i_t \mid s_t)
\right),
}
where
\eqn{\delta_t^i & = \bar{r}(s_t, a_t; \lambda^i_t) + \gamma V(s_{t+1}; v_t^i) - V(s_{t}; v_t^i).
}

By~\Cref{thm:critic}, the critic and reward parameters $(v_t^i,\lambda_t^i)$ almost surely converge to $(v_\theta,\lambda_\theta)$ for every fixed $\theta\in \Theta$. Also, the map $\theta\mapsto(v_\theta,\lambda_\theta)$ is continuous on $\Theta$ by~\Cref{assum:markov} and the continuity of $d_\theta$ in $\theta$~\cite[Appendix B.3]{zhang2018fully}. Since the actor evolves on a slower timescale by~\Cref{assum:step_sizes}, $\theta_t$ can be held constant when analyzing the faster recursions, and the standard two-time-scale argument from~\cite[Theorem 4.10]{zhang2018fully} and~\cite[Theorem 6]{ye2024resilient} applies, yielding 
\eqn{\label{eq:tracking}
\lim_{t\to\infty}\|v^i_t-v_{\theta_t}\|
=\lim_{t\to\infty}\|\lambda^i_t-\lambda_{\theta_t}\|=0
\quad\text{a.s.},\ \forall i\in\Vcal .
}
Hence the bias $\varepsilon^i_t:=\delta^i_t-\delta_{\theta_t}(s_t,a_t,s_{t+1})$
introduced by using $(v^i_t,\lambda^i_t)$ in place of
$(v_{\theta_t},\lambda_{\theta_t})$ satisfies $\varepsilon^i_t\to0$ almost
surely by~\eqref{eq:tracking}.

Combining this with Assumptions~\ref{assump:features} and~\ref{assum:reward_bound}, we know that $\varepsilon_t^i$ in the actor TD error $\delta_t^i=\delta_{\theta_t}(s_t, a_t,s_{t+1})+\varepsilon_t^i$ is bounded and $\varepsilon_t^i\to 0$ almost surely, and the actor update asymptotically becomes
\eqn{\label{eq:second_actor_update}
\theta_{t+1}^i
=
\Psi_{\Theta^i}
\left(
\theta_t^i+
\alpha_t^\theta \cdot
\delta_{\theta_t}(s_t, a_t,s_{t+1})
\nabla_{\theta^i}
\log \pi^i_{\theta^i_t}(a^i_t \mid s_t)
\right).
}

At this point, the actor recursion has the same limiting stochastic approximation form as those analyzed in~\cite[Theorem~4.10]{zhang2018fully} and~\cite[Theorem~6]{ye2024resilient}. It now remains to verify that the assumptions required for their stochastic approximation analysis hold in our setting. More specifically, we have
\begin{itemize}
\item $\sup_t\Ebb\big(\|\delta^i_t\nabla_{\theta^i}\log\pi^i_{\theta^i_t}(a^i_t|s_t)\|
      \mid \theta_{t'},\,t'\leq t\big)<\infty$ by Assumptions~\ref{assump:features},~\ref{assum:markov},~\ref{assum:reward_bound} and~\ref{assum:projection} with~\Cref{thm:critic};
\item compact and hyper-rectangular projection set $\Theta^i$ by \Cref{assum:projection};
\item $\sum_{t=0}^{\infty} \alpha_t^\theta = \infty$, $\sum_{t=0}^{\infty} (\alpha_t^{\theta})^2 < \infty$, and $\lim_{t \to \infty} \alpha_{t+1}^{\theta}/\alpha_{t}^{\theta} = 1$ by~\Cref{assum:step_sizes}; 
\item bias $\varepsilon^i_t\to0$ almost surely (which is established above);
\item continuous limiting mean update $g(\theta^i_t)=\Ebb_{d_\theta,\pi_\theta,P}\big[\delta_\theta(s_t,a_t,s_{t+1})\nabla_{\theta^i}\log\pi^i_{\theta^i_t}(a^i_t|s_t)\big]$ by~\Cref{assum:markov};
\item continuity of $\theta\mapsto(v_\theta,\lambda_\theta)$ (which is established above).
\end{itemize}

Hence, by~\cite[Theorem 6]{ye2024resilient}, the asymptotic behavior of the actor is governed by
\eqnN{
\dot{\theta}^i=\hat{\Psi}_{\Theta^i}
\left[
\mathbb{E}_{d_\theta,\pi_\theta, P}
\left[
\delta_{\theta}(s_t,a_t,s_{t+1})
\nabla_{\theta^i}
\log\pi^i_{\theta^i}(a_t^i|s_t)
\right]
\right],
}
and $\theta^i$ converges almost surely to a point in the set of locally asymptotically stable equilibria of~\eqref{eq:actor_odes}.
\end{proof}



Our analysis is in the same spirit as those developed in~\cite{zhang2018fully, figura2021adversarial, ye2024resilient}. 
The convergence guarantees established in~\Cref{thm:actor} show that every agent converges almost
surely to an asymptotically stable equilibrium of~\eqref{eq:actor_odes}, which is the standard convergence guarantee for actor-critic algorithms even in the single-agent setting~\cite{bhatnagar2009natural,zhang2018fully}. Our result nevertheless represents a notable improvement over existing Byzantine-resilient MARL methods (e.g.,~\cite{wu2021byzantine,ye2024resilient,gong2026resilient}), which guarantee convergence only to \emph{a neighborhood}.

The key distinction is that FRAC-MARL completely removes the Byzantine-induced bias that affects the existing work before it enters the consensus updates. Specifically, Byzantine attacks introduce bias in two ways: (i) corrupted messages that, if accepted, directly perturb the consensus updates, and (ii) asymmetric information flow created by filtering, which itself introduces consensus bias. As established by
Lemmas~\ref{lem:filtered_undirected}-\ref{lem:filtered_out}, our method removes such biases through $(r,r')$-redundancy. Consequently, the critic and reward estimates converge to the same values as in the Byzantine-free setting.

We note that such stronger convergence guarantees are obtained under~\Cref{assum:cooperative}, where all agents are assumed to follow the prescribed protocol, and Byzantine attacks are introduced only through communication corruption. This setting constitutes a weaker attack model than the classical Byzantine-agent model~\cite{su2021byzantine, leblanc2013resilient}, in which Byzantine agents may \emph{arbitrarily} deviate from the prescribed protocol.

However, this weakened attack model at the same time allows our method to remain completely independent of how the Byzantine attack unfolds, as long as the number of corrupted communications stays bounded. This is also an improvement over some of the existing work on Byzantine-resilient AC-MARL~\cite{ye2024resilient,gong2026resilient}, as they assume that Byzantine agents' policies converge to some stationary policy, which ensures stationary MDP from the perspective of non-Byzantine agents. This effectively restricts the learning-dynamics of the Byzantine attackers in the asymptotic sense. FRAC-MARL requires no restriction on the temporal or learning-dynamics behavior of Byzantine attacks. As a result, as long as the attack occurs at the communication level, our guarantee holds regardless of the behavior of the Byzantine attack.

\section{Redundant Network Graph}
 Through Theorems~\ref{thm:critic}-\ref{thm:actor}, we have shown that $(r,r-2F-1)$-redundancy where $r>2F$ plays a pivotal role in achieving resilience against the $F$-total Byzantine edge attacks. In this section, we explore different aspects of the notion of $(r,r')$-redundancy by providing (i) a systematic construction of an $(r,r')$-redundant graph (\Cref{prop:construction}) and (ii) its computation time (\Cref{prop:computation}). As the underlying structural requirements are independent of the learning dynamics, we focus on time-invariant graphs and drop the argument $t$ on a graph throughout this discussion.

We first present a systematic method to construct $(r,r')$-redundant graphs for any $r$ and $r'$:

\begin{prop}
\label{prop:construction}
    Let $\Vcal=\{1,\dots, n\}$ where $n>r$, and $\Vcal_c=\{1,\dots, r\}\subset \Vcal$. Then, a graph $\Gcal=(\Vcal, \Ecal)$ is $(r,r')$-redundant for $r>r'\geq 0$ if (i) every node in $\Vcal_c$ is connected to every other node in $\Vcal_c$ and (ii) every node $i\in\Vcal\setminus \Vcal_c$ is connected to all $r$ nodes in $\Vcal_c$.
\end{prop}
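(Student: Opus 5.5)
The plan is to show that every pair of distinct nodes shares at least $r$ neighbors, counting direct links. Then the $r$-2-hop graph $\Gcal^r$ is complete, so it is connected, and condition 2) of \Cref{def:redundant} holds vacuously because no pair $(i,j)\notin\Ecal^r$ exists. This gives $(r,r')$-redundancy for every $r'$ with $0\le r'<r$. I will only use the fact that hypotheses (i)--(ii) force certain edges. The argument therefore also covers graphs with extra edges among nodes of $\Vcal\setminus\Vcal_c$, since adding edges can only enlarge the sets $\Bcal_i$ and $\Ncal_j$.

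First I record the neighborhoods that (i)--(ii) force. For a core node $c\in\Vcal_c$, it is adjacent to every other core node by (i) and to every non-core node by (ii). Hence $\Ncal_c=\Vcal\setminus\{c\}$ and $\Bcal_c=\Vcal$. For a non-core node $u\in\Vcal\setminus\Vcal_c$, we have $\Vcal_c\subseteq\Ncal_u$, and therefore $\Vcal_c\cup\{u\}\subseteq\Bcal_u$.

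Next I lower-bound $|\Bcal_i\cap\Ncal_j|$ for $i\ne j$ in four cases.
\begin{itemize}
\item If $i,j\in\Vcal_c$, then $\Bcal_i\cap\Ncal_j=\Vcal\setminus\{j\}$, which has size $n-1\ge r$ because $n>r$.
\item If $i\in\Vcal_c$ and $j\notin\Vcal_c$, then $\Bcal_i\cap\Ncal_j=\Ncal_j\supseteq\Vcal_c$, which has size at least $r$.
\item If $i\notin\Vcal_c$ and $j\in\Vcal_c$, then $\Bcal_i\cap\Ncal_j\supseteq(\Vcal_c\setminus\{j\})\cup\{i\}$. Since $i\notin\Vcal_c$, this set has $(r-1)+1=r$ elements.
\item If $i,j\notin\Vcal_c$, then $\Bcal_i\cap\Ncal_j\supseteq\Vcal_c$, which has size at least $r$.
\end{itemize}
In all cases $(i,j)\in\Ecal^r$, so $\Gcal^r$ is the complete graph on $\Vcal$. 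By \Cref{lem:undirected}-style symmetry, or simply by completeness, $\Gcal^r$ is connected and undirected.

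The only delicate case is the third one, a non-core $i$ with a core $j$. There $j$ itself must be excluded from $\Ncal_j$, and the count reaches exactly $r$ only because $i$ counts itself through $\Bcal_i$. This is precisely why the definition uses $\Bcal_i$ rather than $\Ncal_i$, so I will spell out this case carefully. I will also note that $n>r$ is what keeps $\Vcal\setminus\Vcal_c$ nonempty and the core--core count at least $r$.
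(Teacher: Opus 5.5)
Your proposal is correct and matches the paper's proof essentially step for step. Both use the same neighborhoods forced by (i)--(ii), the same four-case bound $|\Bcal_i\cap\Ncal_j|\ge r$ (with the non-core $i$, core $j$ case closed by counting $i$ itself through $\Bcal_i$), and the same conclusion that $\Gcal^r$ is complete, so condition 2) holds vacuously for every $r'<r$.
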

\begin{proof}
By (i) and (ii), every $i\in\Vcal_c$ is adjacent to every other node of
$\Vcal$, so $\Ncal_i=\Vcal\setminus\{i\}$ and $\Bcal_i=\Vcal$; and every
$i\in\Vcal\setminus\Vcal_c$ is adjacent to all of $\Vcal_c$, so
$\Ncal_i\supseteq\Vcal_c$ and $\Bcal_i\supseteq\Vcal_c\cup\{i\}$.

We show $|\Bcal_i\cap \Ncal_j|\geq r$ for every $i\neq j$, considering four cases.
(a) If $i,j\in\Vcal_c$, then $\Bcal_i\cap\Ncal_j=\Vcal\setminus\{j\}$, so
$|\Bcal_i\cap\Ncal_j|=n-1\geq r$ since $n>r$.
(b) If $i\in\Vcal_c$ and $j\notin\Vcal_c$, then
$\Bcal_i\cap\Ncal_j=\Ncal_j\supseteq\Vcal_c$, so $|\Bcal_i\cap\Ncal_j|\geq r$.
(c) If $i\notin\Vcal_c$ and $j\in\Vcal_c$, then
$|\Bcal_i\cap\Ncal_j|\geq|(\Vcal_c\cup\{i\})\setminus\{j\}|$. Since $j\in\Vcal_c$ and $i\notin\Vcal_c$,
this set has $(r-1)+1=r$ elements.
(d) If $i,j\notin\Vcal_c$, then $\Bcal_i\cap\Ncal_j\supseteq\Vcal_c$, so
$|\Bcal_i\cap\Ncal_j|\geq r$.

In every case $|\Bcal_i\cap\Ncal_j|\geq r$, so $(i,j)\in\Ecal^r$ for all
$i\neq j$. Hence $\Gcal$ is $(r,r')$-redundant for
every $r'$ with $r>r'\geq 0$.
\end{proof}

While \Cref{prop:construction} provides a method to construct $(r,r')$-redundant graphs, the following result establishes that we can verify the redundancy of an arbitrary graph efficiently.

\begin{prop}
Given an $r, r'\in \Zbb_{\geq 0}$ and a communication graph $\Gcal=(\Vcal, \Ecal)$ with $|\Vcal|=n$, one can verify whether $\Gcal$ is $(r,r')$-redundant in $O(n^{3})$.
    \label{prop:computation}
\end{prop}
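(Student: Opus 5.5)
The plan is to give an explicit algorithm that checks the two conditions of \Cref{def:redundant} and to bound its running time by $O(n^3)$. The algorithm has three stages, and the computation of the pairwise counts $c_{ij}:=|\Bcal_i\cap\Ncal_j|$ dominates the cost.

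\textbf{Step 1 (pairwise counts).} We represent $\Gcal$ by its adjacency matrix $A\in\{0,1\}^{n\times n}$, which has zero diagonal because $\Gcal$ is simple; building it takes $O(n^2)$ time. The extended neighborhood $\Bcal_i$ corresponds to row $i$ of $A+I$, and $\Ncal_j$ corresponds to column $j$ of $A$. Hence
\begin{equation*}
c_{ij}=\sum_{k\in\Vcal}(A+I)_{ik}A_{kj}=\big((A+I)A\big)_{ij}=(A^2+A)_{ij}.
\end{equation*}
Computing $A^2$ by naive matrix multiplication costs $O(n^3)$. Alternatively, for each of the $O(n^2)$ ordered pairs we can intersect two indicator vectors of length $n$, which also costs $O(n^3)$ in total. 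This stage is the bottleneck.

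\textbf{Step 2 (building $\Gcal^r$ and checking connectivity).} In $O(n^2)$ time we form $\Ecal^r=\{(i,j): i\neq j,\ c_{ij}\geq r\}$. By the symmetry argument in \Cref{lem:undirected}, $c_{ij}=c_{ji}$, so $\Gcal^r$ is undirected. This symmetry is also immediate from the fact that $A^2+A$ is symmetric. Connectivity of $\Gcal^r$ is then tested by BFS or DFS in $O(n+|\Ecal^r|)=O(n^2)$ time.

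\textbf{Step 3 (gap condition).} We scan all pairs $i\neq j$ with $(i,j)\notin\Ecal^r$ and check that $c_{ij}\le r'$, which takes $O(n^2)$ time. Equivalently, we verify that no pair satisfies $r'<c_{ij}<r$. The algorithm outputs ``$(r,r')$-redundant'' if and only if both checks pass, and we also verify the side condition $r>r'$ in $O(1)$.

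Correctness follows directly from \Cref{def:redundant}, because each condition is tested exactly as it is stated there. The total cost is $O(n^2)+O(n^3)+O(n^2)+O(n^2)=O(n^3)$. The only step that needs care is Step 1. There we must confirm that the identity $c_{ij}=(A^2+A)_{ij}$ correctly handles the $\{i\}$ in $\Bcal_i$: the term $A_{ij}$ counts $i\in\Ncal_j$, that is, the direct link. The identity also relies on the zero diagonal of $A$, which ensures that $j\notin\Ncal_j$ is never counted. We would also note that fast matrix multiplication improves the bound to $O(n^\omega)$, although this is not needed for the claim.
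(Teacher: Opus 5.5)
Your proposal is correct and follows essentially the same route as the paper. Both compute $A^2+A$ in $O(n^3)$ to obtain every count $|\Bcal_i\cap\Ncal_j|$, threshold those counts against $r$ and $r'$ in $O(n^2)$, and test connectivity of $\Gcal^r$ by BFS. Your added remarks on the zero diagonal and the explicit $r>r'$ check are sound refinements and leave the argument unchanged.
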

\begin{proof}
    Let $A$ be an adjacency matrix of $\Gcal$. Then, $\bar{A}:=A^2+A$ will contain elements $\bar{a}_{ij}$ that counts the number of shared neighbors between nodes $i$ and $j$ (including node $i$ itself) i.e., $|\Bcal_i\cap \Ncal_j|$. Computing $\bar{A}$ using standard matrix multiplication requires $O(n^3)$ operations~\cite[Sec. 4]{cormen2009intro_to_alg}, and checking whether $\bar{a}_{ij}\geq r$ or $\bar{a}_{ij}\leq r'$ adds at most $O(n^2)$ operations. Next, to verify that the $r$-2-hop graph $\Gcal^r = (\Vcal, \Ecal^r)$ of $\Gcal$ is connected, one can perform a Breadth-First Search (BFS), which requires $O(n + m_r)$ time, where $m_r = |\Ecal^r|$~\cite[Sec. 22.2]{cormen2009intro_to_alg}. Therefore, the total required computation is $O(n^{3}+m_r)$. Since $m_r\leq \binom{n}{2}=O(n^2)$, $O(n^{3}+m_r)=O(n^{3})$.
\end{proof}

\Cref{prop:computation} establishes that $(r,r')$-redundancy can be verified efficiently. Compare this with $r$-robustness~\cite{leblanc2013resilient}, whose definition is given below: 
\begin{definition}[$\mathbf r$\textbf{-robustness}~\cite{leblanc2013resilient}]
    A graph $\Gcal = (\Vcal,\Ecal)$ is $\mathbf r$\textbf{-robust} if for every pair of nonempty, disjoint subsets $\Pcal_1,\Pcal_2 \subset \Vcal$, at least one of the subsets contains a node with at least $r$ neighbors outside the subset. That is, there exists a node $i \in \Pcal_k$ such that $|\Ncal_i \setminus \Pcal_k| \geq r$ for some $k \in \{1, 2\}$.
    \label{def:r_robust}
\end{definition}

While $(2F+1)$-robustness provides a sufficient condition for many Byzantine-resilient AC-MARL frameworks~\cite{ye2024resilient, xie2023communication,yao2024communication_efficient, gong2026resilient}, determining whether a graph satisfies this property is coNP-complete~\cite{zhang2015notion}. Consequently, there is no known efficient algorithm for verifying $r$-robustness in general, making robustness-based design impractical for large-scale and dynamic networks. In contrast, $(r,r')$-redundancy offers a tractable alternative, making our method more suitable.

\begin{lemma}
    Let $\Gcal$ be an $(r,r')$-redundant graph constructed according to~\Cref{prop:construction} with $n\geq2r-1$. Then, $\Gcal$ is $r$-robust.
    \label{lem:connection}
\end{lemma}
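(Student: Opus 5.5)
The plan is to verify \Cref{def:r_robust} directly, using the structure guaranteed by \Cref{prop:construction}. Fix arbitrary nonempty, disjoint subsets $\Pcal_1,\Pcal_2\subset\Vcal$. The task is to exhibit some $k\in\{1,2\}$ and a node $i\in\Pcal_k$ with $|\Ncal_i\setminus\Pcal_k|\ge r$. I would split into two cases, according to whether both subsets meet the core $\Vcal_c=\{1,\dots,r\}$. Any extra edges among nodes of $\Vcal\setminus\Vcal_c$ are allowed by the construction. They only enlarge neighborhoods, so they cannot hurt the argument.

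\emph{Case 1: some $\Pcal_k$ satisfies $\Pcal_k\cap\Vcal_c=\emptyset$.} Pick any $i\in\Pcal_k$. Then $i\in\Vcal\setminus\Vcal_c$, so by condition (ii) of \Cref{prop:construction}, $\Ncal_i\supseteq\Vcal_c$. None of these $r$ core nodes lies in $\Pcal_k$, hence $|\Ncal_i\setminus\Pcal_k|\ge|\Vcal_c|=r$, as required.

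\emph{Case 2: both $\Pcal_1$ and $\Pcal_2$ contain a core node.} As shown in the proof of \Cref{prop:construction}, every $i\in\Vcal_c$ satisfies $\Ncal_i=\Vcal\setminus\{i\}$. For a core node $i\in\Pcal_k$ this gives
\eqnN{
|\Ncal_i\setminus\Pcal_k|=|\Vcal\setminus\Pcal_k|=n-|\Pcal_k|.
}
It therefore suffices to show that $|\Pcal_k|\le n-r$ for some $k$. Suppose instead that $|\Pcal_1|,|\Pcal_2|\ge n-r+1$. By disjointness,
\eqnN{
n\ge|\Pcal_1|+|\Pcal_2|\ge 2n-2r+2,
}
that is, $n\le 2r-2$. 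This contradicts the hypothesis $n\ge 2r-1$.

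This counting step in Case 2 is the only place where the hypothesis $n\ge 2r-1$ enters, and it is the crux of the proof. It shows that the bound is exactly what makes both subsets unable to be too large at the same time. Combining the two cases establishes that $\Gcal$ is $r$-robust.
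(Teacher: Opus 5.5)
Your proof is correct and uses the same ingredients as the paper's: core nodes are adjacent to every other node, non-core nodes are adjacent to all of $\Vcal_c$, and $n\geq 2r-1$ forces one of the two sets to be small. The only difference is the order of the steps: the paper first assumes without loss of generality that $|\Pcal_1|\leq\lfloor n/2\rfloor$ and then splits on whether $\Pcal_1$ meets $\Vcal_c$ (using $\lceil n/2\rceil\geq r$), whereas you split on core-intersection first and then use the pigeonhole count to find a core-containing set of size at most $n-r$.
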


\begin{proof}
By~\Cref{prop:construction}, each $i\in\Vcal_c$ satisfies
$\Ncal_i=\Vcal\setminus\{i\}$, and each $i\in\Vcal\setminus\Vcal_c$ satisfies
$\Ncal_i\supseteq\Vcal_c$. Let $\Pcal_1,\Pcal_2\subset\Vcal$ be nonempty and
disjoint; since $|\Pcal_1|+|\Pcal_2|\leq n$, assume without loss of generality
$|\Pcal_1|\leq\lfloor n/2\rfloor$. If $\Pcal_1\cap\Vcal_c\neq\emptyset$, any
$i\in\Pcal_1\cap\Vcal_c$ gives
$|\Ncal_i\setminus\Pcal_1|=n-|\Pcal_1|\geq\lceil n/2\rceil\geq r$. Otherwise
$\Vcal_c\cap\Pcal_1=\emptyset$, so any $i\in\Pcal_1$ gives
$\Ncal_i\setminus\Pcal_1\supseteq\Vcal_c$ and thus
$|\Ncal_i\setminus\Pcal_1|\geq r$. In either case $\Pcal_1$ contains a node $i$ such that $|\Ncal_i\setminus \Pcal_1|\geq r$, completing the proof.
\end{proof}

\Cref{lem:connection} connects $(r,r')$-redundancy of graphs constructed
via~\Cref{prop:construction} to the notion of
$r$-robustness. Note that this result does not establish a general
characterization between the two properties; it applies only to the specific
class of graphs. Establishing a full characterization of the relationship between these two topological conditions remains future work.

\section{Simulation Results}

We evaluate the proposed algorithm on a cooperative formation task
built on the Multi-Particle
Environments~2 (MPE2)~\cite{lowe2017multi}. We consider a team of
$n = 10$ agents that must arrange themselves into a circular formation of
radius $R_{\rm circle} = 0.5$ around a stationary landmark located at $p^{\mathrm{lm}}\in \R^2$. Each agent estimates its actor, critic, and team-average reward functions using neural networks with a single hidden layer of 30 units and Leaky ReLU activation functions with negative slope $0.1$.

\noindent\textbf{State Space:}
At each time $t$, agent $i\in \Vcal=\{1,\ldots,10\}$ observes the state
\eqn{s_t = \big[(o_t^1)^\top, (o_t^2)^\top, \dots, (o_t^n)^\top \big]^\top \in
    \mathbb{R}^{6n},}
where $o_t^i = \big[(\dot{p}_t^i)^\top,\; (p_t^i)^\top, (p_t^{i,{\rm rel}})^\top\big]^\top$ contains the agent $i$'s velocity
$\dot{p}_t^i$, position $p_t^i \in \mathbb{R}^2$,
and its relative position to the landmark, $p_t^{i,{\rm rel}}=p_t^i - p^{\mathrm{lm}}$.

\noindent\textbf{Action Space:}
Each agent selects a discrete action
$a_t^i \in \Acal^i = \{{\rm stay},{\rm left},{\rm right},{\rm down},{\rm up}\}$, corresponding to stay still or a unit force applied along one
of the four cardinal directions.

\noindent\textbf{Reward Space:} Each agent $i$ receives a reward $r_{t+1}^i=-\operatorname{clip}(\|p_t^i-g_t^i\|_2,0,2)$ based on its distance to an assigned formation goal $g_t^i$. At each time step, $n$ goal positions $\{g_t^i\}_{i\in\Vcal}$ are evenly spaced on a circle of radius $R_{\rm circle}$ around the landmark, with the orientation determined by the current agents' configurations, and are assigned to the agents by minimizing the total assignment distance using the Hungarian algorithm.

\noindent\textbf{Agent Dynamics:} We use the default dynamics defined in MPE2, where each agent $i$ is modeled as a point mass with damped double-integrator dynamics:
\begin{align}
    p_{t+1}^i &= p_t^i + \dot p_t^i \, \Delta t, \\
    \dot p_{t+1}^i &= (1 - \beta)\,\dot p_t^i +
        \frac{\Delta t}{m}\, u_t^i,
\end{align}
where $u_t^i \in \{(0,0), (\pm 1, 0),(0, \pm 1)\}$ is the control input selected by $a_t^i$, with sampling time $\Delta t = 0.1$, damping coefficient $\beta = 0.25$, and mass $m = 1$.

\noindent\textbf{Training:} We compare our method against four baselines:
\begin{itemize}
    \item \textbf{Normal:} the vanilla decentralized AC-MARL from~\cite[Alg.~2]{zhang2018fully} without attacks;
    \item \textbf{Naive:} the vanilla decentralized AC-MARL from~\cite[Alg.~2]{zhang2018fully} under $F$ Byzantine edge attacks;
    \item \textbf{Projection:} the resilient AC-MARL method from~\cite[Alg.~2]{ye2024resilient} that uses a projection-based defense mechanism. Following the authors' implementation, trimmed-mean aggregation is applied to the hidden-layer parameters before the projection-based updates; and
    \item \textbf{Trimmed-Mean:} the resilient AC-MARL from~\cite[Alg.~1]{wu2021byzantine} that performs an element-wise trimmed-mean operation.
\end{itemize}
 We simulate Byzantine edge attacks by randomly selecting $F$ edges incident to agent $1$, replacing the transmitted messages with parameter tuples obtained by adding a positive offset to each parameter tensor. The offset scaled according to the mean absolute magnitude of the corresponding tensor. Then, it is upper bounded by 1 to ensure numerical stability.

We train all methods for $10000$ episodes using five different random seeds, with each episode consisting of $35$ steps, using 10 critic and reward-function updates per actor update. The learning rates are $\alpha_t^v=\alpha_t^\lambda=0.01$ and $\alpha_t^\theta=0.001$. Updates are performed in batches every 20 episodes. We set the discount factor to $\gamma=0.9$ and each agent selects a random action with probability $\mu=0.1$.

\begin{figure}
    \centering
\includegraphics[width=0.95\linewidth]{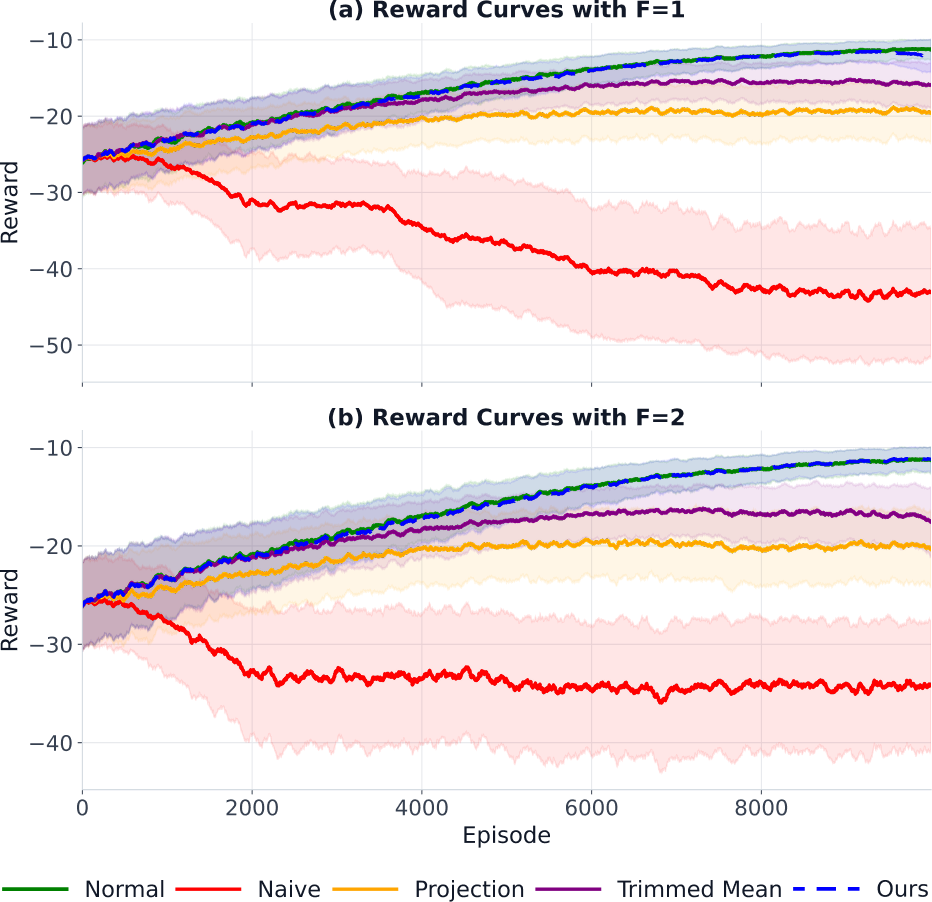}
    \caption{Reward curves under $F$-total Byzantine edge attack, with (a) $F=1$ and (b) $F=2$. Our method attains the same reward level as the attack-free Normal baseline, whereas the other methods converge to suboptimal policies with visibly lower rewards.}
    \label{fig:bounded}
\end{figure}

We consider $F=1$ and $F=2$ with $r=2F+1$. Every $20$ episodes we generate a $(2F+1,0)$-redundant network using the construction mechanism in~\Cref{prop:construction}, while randomly permuting the agent indices to emulate a time-varying communication topology. By construction, the resulting networks satisfy the topological conditions required by Theorems~\ref{thm:critic}-\ref{thm:actor}. By~\Cref{lem:connection}, the network is also $(2F+1)$-robust for all time, satisfying the sufficient topological condition required for the convergence of the projection-based method in~\cite{ye2024resilient}. 

\Cref{fig:bounded} reports the reward curves under $F=1$ and $F=2$ Byzantine edge attacks. Our method matches the attack-free Normal baseline in both settings. This is consistent with Theorems~\ref{thm:critic}-\ref{thm:actor}, which guarantee convergence to equilibria of the limiting ODEs rather than a neighborhood of them. In contrast, the other methods converge to lower reward levels, reflecting the residual errors introduced by their consensus steps under $F$-total Byzantine edge attacks.

\section{Conclusion}
We study resilient actor-critic multi-agent reinforcement learning under Byzantine edge attacks. Our method exploits the redundancy of two-hop communication to decide which messages to trust and filter. We introduce a novel topological condition, $(r,r')$-redundancy, to provide the conditions under which the policy parameters converge almost surely to a locally asymptotically stable equilibrium of the attack-free limiting ODE. We validate our method on a multi-agent formation control task.

\section*{References}
\bibliographystyle{IEEEtran}
\bibliography{references_ll}

\begin{IEEEbiography}[{\includegraphics[width=1in,height=1.25in,clip,keepaspectratio]{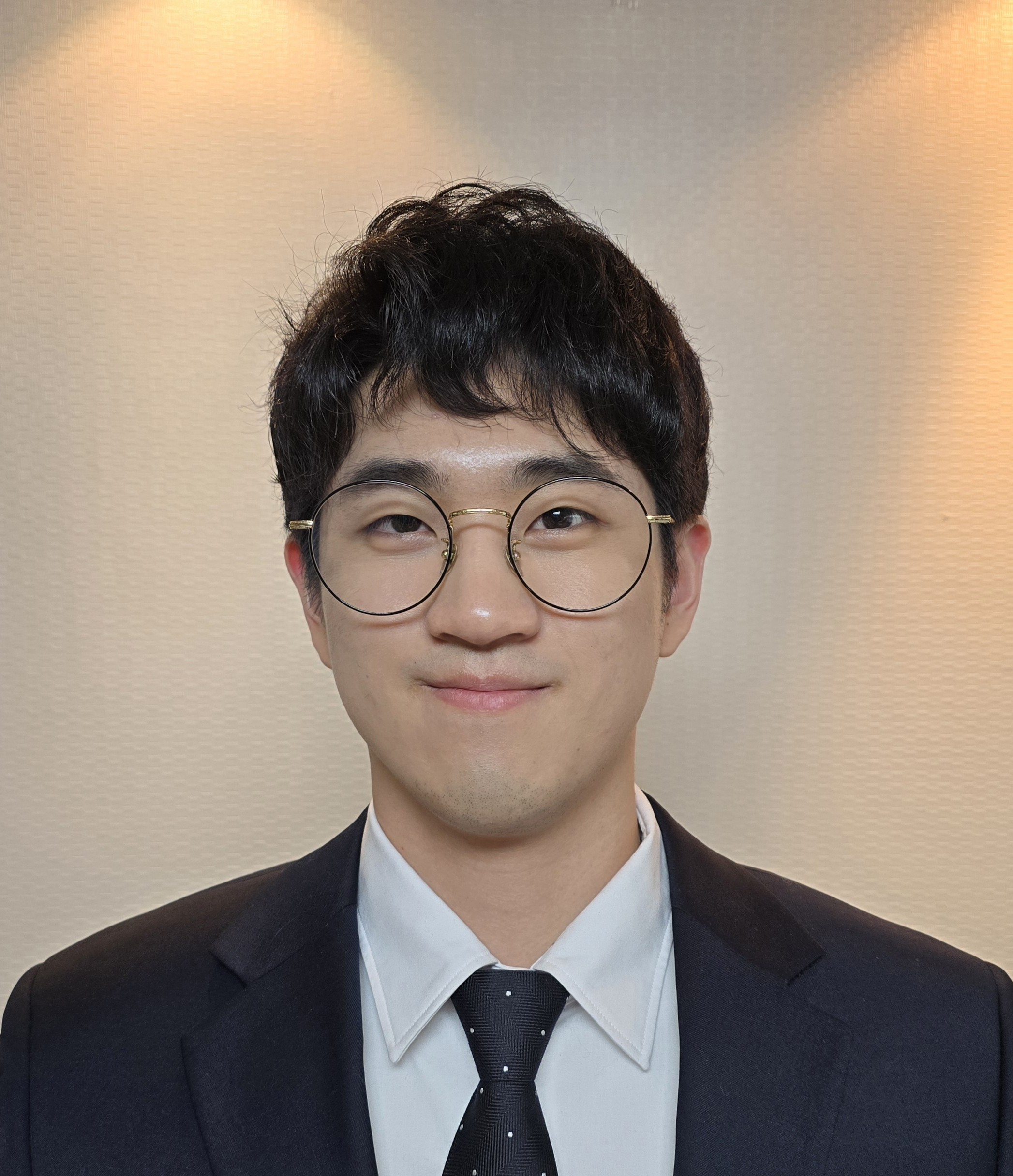}}]{Haejoon Lee} (Student Member, IEEE) received the B.S. degree in applied math and statistics from Stony Brook University, Stony Brook, NY, USA, in 2023. He earned the M.S.
degree in robotics in 2025 from the University of
Michigan, Ann Arbor, MI, USA, where he is currently working toward the Ph.D. degree in robotics, advised by Prof. Dimitra Panagou. 

His research interests include safety, resilience, and security of autonomous systems, with particular emphasis on distributed consensus, optimization, and learning for multi-agent systems in adversarial and uncertain environments. 

\end{IEEEbiography}

\begin{IEEEbiography}[{\includegraphics[width=1in,height=1.25in,trim={0mm 0 0 0},clip,keepaspectratio]{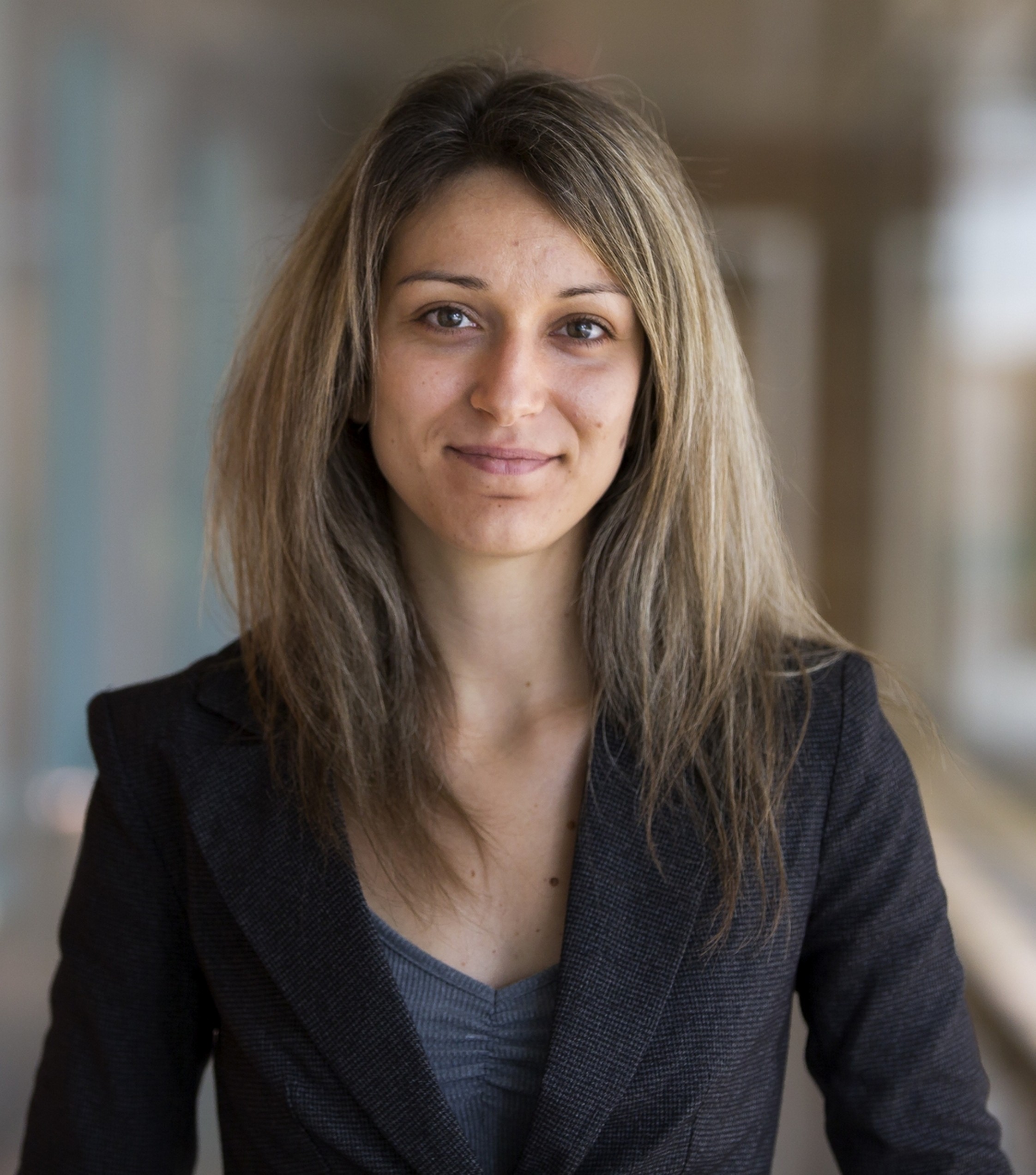}}]{Dimitra Panagou} (Diploma (2006) and PhD (2012) in Mechanical Engineering from the National Technical University of Athens, Greece) is an Associate Professor with the Department of Robotics, with a courtesy appointment with the Department of Aerospace Engineering, University of Michigan. Her research program spans the areas of nonlinear systems and control; multi-agent systems; autonomy; and  aerospace robotics. She is particularly interested in the development of provably-correct methods for the safe and secure (resilient) operation of autonomous systems with applications in robot/sensor networks and multi-vehicle systems under uncertainty. She is a recipient of the NASA Early Career Faculty Award, the AFOSR Young Investigator Award, the NSF CAREER Award, the George J. Huebner, Jr. Research Excellence Award, and a Senior Member of the IEEE and the AIAA.
\end{IEEEbiography}

\end{document}